\pdfoutput=1 
\documentclass[11pt]{article}

\usepackage[font=libertinus, citestyle=numeric]{kurbanlab}

\DeclareAffiliation{hbku}{%
  College of Science and Engineering, Hamad Bin Khalifa University, Doha, Qatar}

\DeclareAffiliation{tamu}{%
  Department of Computer and Electrical Engineering,
  Texas A\&M University, College Station, TX, USA}

\DeclareAffiliation{iub}{%
  Luddy School of Informatics, Computing, and Engineering,
  Indiana University Bloomington, Bloomington, IN, USA}

\DeclareAffiliation{ankara}{%
  Ankara University, Ankara, Turkey}

\usepackage{float}
\usepackage{ragged2e}
\usetikzlibrary{arrows.meta,positioning,calc,shapes.geometric,fit,backgrounds,%
                decorations.pathreplacing}

\definecolor{npblue}{HTML}{0072B2}
\definecolor{nporange}{HTML}{E69F00}
\definecolor{npgreen}{HTML}{009E73}
\definecolor{npred}{HTML}{D55E00}
\definecolor{nppurple}{HTML}{CC79A7}
\definecolor{npslate}{HTML}{3A4A5A}
\tikzset{
  proc/.style={rounded corners=2pt,draw=npblue!55,fill=npblue!8,line width=0.6pt,align=center,inner sep=3.5pt,font=\small},
  data/.style={rounded corners=2pt,draw=black!35,fill=black!4,line width=0.6pt,align=center,inner sep=3.5pt,font=\small},
  good/.style={rounded corners=2pt,draw=npgreen!65,fill=npgreen!12,line width=0.6pt,align=center,inner sep=3pt,font=\small},
  bad/.style={rounded corners=2pt,draw=npred!65,fill=npred!10,line width=0.6pt,align=center,inner sep=3pt,font=\small},
  acc/.style={rounded corners=2pt,draw=nporange!70,fill=nporange!12,line width=0.6pt,align=center,inner sep=3pt,font=\small},
  flow/.style={-{Latex[length=2.0mm,width=1.6mm]},line width=0.7pt,draw=black!60},
  plab/.style={font=\sffamily\bfseries\large},
  card/.style={draw=black!22,line width=0.6pt,fill=white,rounded corners=2pt},
  ax/.style={->,line width=0.5pt,draw=black!50,>=stealth},
}

\title{Physics as the label for measuring and correcting materials reasoning in
multimodal models}
\RunningTitle{MatPCR: physics as the label for materials reasoning}

\Author[corresponding=hkurban@hbku.edu.qa]{Hasan Kurban}{hbku}
\Author{Rasul Khanbayov}{hbku}
\Author[corresponding=mkurbanphys@gmail.com]{Mustafa Kurban}{ankara}

\Keywords{materials informatics; vision--language models; scientific reasoning;
label-free evaluation; density functional theory}
\CodeURL{https://github.com/KurbanIntelligenceLab/matpcr}

\begin{document}
\maketitle

\begin{abstract}
Vision--language and language models increasingly interpret materials data, yet
benchmarks report that they hallucinate invalid properties and violate physical law.
Evaluation matches final answers to scarce human labels, while discovery agents verify
final proposals or density functional theory (DFT) execution. Neither measures the
physical consistency of a model's reasoning chain. Materials data carries its own
physics, making a large class of materials reasoning verifiable without annotation. We
introduce MatPCR, a label-free benchmark whose programmatic oracles check diffraction
geometry through Bragg's law, scale bars, spectral peaks, and Materials
Project-grounded checks of near-hull stability, computed band-gap class, and net
magnetization. We define the Physical-Consistency Rate over image and structure inputs;
introduce Constraint-Grounded Self-Verification, an agentic loop whose gain survives
self-refinement and equal-compute re-prompting controls; release an open verifier useful
in distribution but near chance on all six held-out constraint types; and derive an
exact identity for how oracle error displaces the reported rate.
\end{abstract}

\printkeywords

\noindent\begin{tcolorbox}[colback=gray!6,colframe=gray!50,boxrule=0.5pt]
\textbf{Key contributions.} (1) We reframe materials-model evaluation from answer accuracy to the physical
consistency of the reasoning chain, checkable without human labels for a large,
well-defined class of physical errors, across vision--language and language inputs. (2) We release MatPCR: programmatic,
modality-specific physical oracles including DFT-grounded property checks. (3) We
define PCR, a metric with bootstrap confidence intervals and a paired significance
test, and prove an exact identity for the bias an imperfect oracle induces, which fixes
what the oracle-validity gate must establish for the reported rate to be trustworthy.
(4) We fine-tune and release an open physical-consistency verifier, a useful
in-distribution detector whose transfer to six held-out constraint types is at or near
chance,
and benchmark it against zero-shot LLM-judge and self-confidence baselines. (5) We
propose CGSV, a label-free agentic self-verification loop, isolate its effect from
unaided self-refinement and equal-compute re-prompting controls, and give a protocol to measure
its effect on violations.
\end{tcolorbox}

\section{Introduction}
Materials data is governed by physics that is, in large part, programmatically
checkable. A micrograph's scale bar fixes its spatial calibration; a powder
diffraction pattern's peak positions follow from a crystal structure and the source
through Bragg's law; a crystal's thermodynamic stability is set by its energy
relative to a convex hull; and a computed property has a definite sign and order of
magnitude. Vision--language models (VLMs), built on open instruction-tuned
architectures\cite{liu2024llava} as well as closed frontier systems, and language
models (LLMs) are now
routinely asked to reason over such data step by step, in the free-form
chain-of-thought style now standard for eliciting multi-step reasoning from these
models,\cite{wei2022cot} and a large literature evaluates them on
materials tasks;\cite{lai2025materials,npj2026vlmeval,choudhary2025microscopygpt,li2024mmsci}
classical computer-vision pipelines for microstructure and defect analysis predate
them.\cite{holm2020cv,li2018defect}

The problem this creates is now well documented. Materials benchmarks report that
general-purpose models hallucinate invalid property values\cite{llm4matbench2024,lai2025materials}
and benchmark their spatial and structural reasoning over crystals,\cite{atomworld2025}
motivating purpose-built multimodal materials models.\cite{matterchat2026} Beyond
materials, recent work shows vision--language models violate basic physical law,
for example failing conservation under transformation;\cite{conservationbench2026}
in response, open vision--language models now add physics-grounded reinforcement
learning and inference-time self-verification to strengthen physics-problem
reasoning.\cite{p1vl2026} Across domains, a shared response has emerged:
a model's reasoning should be audited against laws or logic in a
reference-free way, whether through conservation-grounded metrics or deterministic
formal checks.\cite{conservationbench2026,neurosymclinical2026}

That literature, like multimodal evaluation generally, scores the final answer
against human labels. A parallel and fast-growing line builds discovery agents that
couple language models with simulation and first-principles tools to propose and
screen candidates;\cite{dreams2025,crystalyse2025,mapps2025,atomagents2024,llmatdesign2024,matpc2025,kang2024chatmof,zhang2024honeycomb,survey2025aimat,eaa2026,matty2026}
some verify the physics of final proposals, for example through physics-aware
rejection sampling,\cite{physrej2025} mitigate hallucination during DFT
execution,\cite{dreams2025} or pre-screen candidates with machine-learned
interatomic potentials before DFT.\cite{genbaselines2025,chgnet2023} General-domain
methods verify chains of thought via visual grounding or generic
programs,\cite{corgi2025,mmverify2025,vlmr3_2026,prove2024} but against visual
evidence or generic programs rather than domain physics, and on general benchmarks
rather than materials. A third line builds verifiers and process-reward models that
score reasoning steps;\cite{processbench2025,compassverifier2025,t1verify2025} these
have driven reinforcement learning from verifiable rewards in mathematics and code,
where correctness is programmatically checkable,\cite{deepseekr1_2025} but
verification has remained under-specified for scientific reasoning, where answers are
free-form and references are scarce.\cite{scirlvr2026} Across all three lines, the
physical consistency of the model's reasoning chain on materials data is neither
measured nor made reusable. Yet that is what
determines whether a model can be trusted inside an analysis or screening pipeline:
a chain that reads a 100~nm scale bar and then reports a 10~$\mu$m feature, assigns
a peak to a $2\theta$ the stated phase cannot produce, or calls a structure stable
that lies far above the convex hull, is wrong on its face. Checking the chain against
external physics is also less susceptible than outcome- or judge-based verification to a
plausible but physically inadmissible chain: a model's stated reasoning need not
faithfully reflect its computation, so an unfaithful trace can satisfy a final-answer or
judge check while resting on physics that does not hold, whereas a programmatic oracle
tests the admissibility of the stated claims directly.\cite{faithflow2026}

We make one observation: materials data contains directly checkable physical
constraints for a large class of reasoning errors, and first-principles databases
extend these constraints from geometry to computed materials properties. These constraints are
checkable with no human annotation, yielding a label-free signal that scales with the
available public data and that applies equally to image inputs (VLMs) and
structure or text inputs (LLMs). This brings to materials the reference-free correctness
signal that has driven progress in mathematics and code, though here the oracles are
imperfect-precision necessary conditions rather than the effectively exact checkers of
those domains. We turn this into MatPCR, a benchmark and a set of
oracles including DFT-grounded property checks; the Physical-Consistency Rate (PCR);
Constraint-Grounded Self-Verification (CGSV); and an open, fine-tuned verifier
released as a reusable artifact (Fig.~\ref{fig:teaser}).

The distinction from each neighboring line is precise. Against materials benchmarks
that score answers or property predictions,\cite{lai2025materials,llm4matbench2024,atomworld2025,matterchat2026}
we score the physical consistency of the reasoning chain rather than the final
answer, with a label-free oracle. Against physical-law evaluations of model
reasoning,\cite{conservationbench2026,p1vl2026} we ground the check in materials
characterization and first-principles DFT, span both image and structure inputs, and
release a verifier and a label-free self-verification loop rather than only a diagnostic.
Against discovery agents,\cite{dreams2025,physrej2025} we measure reasoning-chain
consistency and act on oracle-flagged violations, rather than screen final proposals or
run DFT. Against general verifiers
and process-reward models,\cite{processbench2025,compassverifier2025,t1verify2025}
including recent step-level verifiers for scientific or structured reasoning,\cite{sciprm2026,vprm2026}
we supply a domain-specific, physics- and DFT-grounded, label-free signal for a setting
where verification was previously under-specified. Those verifiers need an answer key, a
tool call, or gold reasoning annotation; our oracle needs none of these. The two closest contemporaneous
efforts use physics as a programmatic signal for materials models but stop short of the
reasoning chain: one benchmarks vision--language models on X-ray-diffraction peak
indexing, grading the final indexed answer against structure-derived labels on the
diffraction modality alone,\cite{crystalxrdbench2026} and the other uses DFT-computed
energies and band gaps as a label-free reward, but as a binary exact-match signal on the
final answer during reinforcement-learning training rather than a chain-level
consistency measure at evaluation and inference time.\cite{matpref2026} MatPCR differs
from both in scoring the physical consistency of the intermediate chain, spanning four
oracle modalities, and closing a label-free self-verification loop with a released
verifier. To our knowledge no single prior effort combines a label-free,
reasoning-chain-level, multi-oracle, materials-specific consistency benchmark with a
self-verification loop and a released verifier.

This work also extends a line of research by the present authors on multimodal
resources and benchmarks for crystalline
materials,\cite{polat2025crysmtm,polat2026extrapolation} on machine-learning property
prediction for nanomaterials,\cite{polat2025quantumshellnet,polat2024multimodalnp} on
benchmarking model hallucination,\cite{abdaljalil2025halluverse} and on
physics-grounded benchmarks for dynamic systems.\cite{khanbayov2026iris} MatPCR is
distinct from each: rather than providing a dataset, scoring generation or property
prediction, or measuring text factuality, it measures the physical consistency of the
reasoning chain itself and acts on the violations it finds, with label-free, DFT-grounded
oracles.

\begin{figure}[t]
\centering
\adjustbox{max size={\linewidth}{0.66\textheight}}{%
\begin{tikzpicture}[font=\sffamily]
\node[plab,anchor=west] at (-0.10,8.85){a)};
\node[font=\sffamily\small\bfseries,anchor=west] at (0.55,8.85){Materials data fixes quantities a reasoning chain can be checked against};
\node[card,minimum width=3.15cm,minimum height=2.65cm,anchor=south west] at (0.0,5.7){};
\node[card,minimum width=3.15cm,minimum height=2.65cm,anchor=south west] at (3.3,5.7){};
\node[card,minimum width=3.15cm,minimum height=2.65cm,anchor=south west] at (6.6,5.7){};
\node[card,minimum width=3.15cm,minimum height=2.65cm,anchor=south west] at (9.9,5.7){};
\node[font=\sffamily\fontsize{7}{8}\selectfont\bfseries,anchor=north west] at (0.16,8.25){\textcolor{npblue}{\textbullet}\ Diffraction};
\draw[ax] (0.55,6.55)--(3.0,6.55);
\draw[ax] (0.55,6.55)--(0.55,7.800000000000001);
\draw[npblue,line width=1.4pt] (0.95,6.55)--(0.95,7.27);
\draw[npblue,line width=1.4pt] (1.4,6.55)--(1.4,7.57);
\draw[npblue,line width=1.4pt] (2.1,6.55)--(2.1,7.109999999999999);
\draw[npblue,line width=1.4pt] (2.6,6.55)--(2.6,6.95);
\draw[npred,line width=0.8pt,dash pattern=on 1.8pt off 1.4pt] (1.75,6.55)--(1.75,7.42);
\node[npred,font=\sffamily\fontsize{6}{7}\selectfont] at (1.75,7.550000000000001){$\times$};
\node[npred,font=\sffamily\fontsize{5.2}{6}\selectfont,anchor=south] at (2.2,7.6){claimed $33^\circ$};
\node[font=\sffamily\fontsize{6}{7}\selectfont,anchor=north] at (1.8,6.5){$2\theta$};
\node[font=\sffamily\fontsize{5.6}{7}\selectfont,rotate=90,anchor=south,black!65] at (0.45,7.15){intensity};
\node[font=\sffamily\fontsize{5.2}{6.4}\selectfont,anchor=north,text=black!62] at (1.575,6.08){admissible $2\theta$ from Bragg's law};
\node[font=\sffamily\fontsize{7}{8}\selectfont\bfseries,anchor=north west] at (3.46,8.25){\textcolor{npslate}{\textbullet}\ Micrograph};
\fill[npslate!48,rounded corners=1pt] (3.5,6.55) rectangle (6.279999999999999,7.82);
\fill[npslate!74] (3.9,7.5) circle (0.13);
\fill[npslate!74] (4.3,6.92) circle (0.1);
\fill[npslate!74] (4.85,7.5600000000000005) circle (0.16);
\fill[npslate!74] (5.35,7.08) circle (0.12);
\fill[npslate!74] (5.88,7.5600000000000005) circle (0.11);
\fill[npslate!74] (6.16,6.98) circle (0.08);
\fill[npslate!74] (4.55,7.24) circle (0.07);
\fill[npslate!74] (5.68,6.78) circle (0.09);
\fill[npslate!74] (4.08,6.75) circle (0.06);
\fill[npslate!74] (5.12,6.88) circle (0.07);
\draw[white!85,line width=0.5pt,fill=npslate!22] (4.88,7.220000000000001) ellipse (0.32 and 0.20);
\draw[{Latex[length=1.2mm]}-{Latex[length=1.2mm]},white,line width=0.5pt] (4.58,7.220000000000001)--(5.18,7.220000000000001);
\node[npred,font=\sffamily\fontsize{5.2}{6}\selectfont,fill=white,rounded corners=0.4pt,inner sep=0.8pt] at (5.02,7.68){claimed $d=10\,\mu$m};
\draw[white,line width=1.6pt] (3.88,6.720000000000001)--(4.43,6.720000000000001);
\node[white,font=\sffamily\fontsize{5.4}{6}\selectfont,anchor=south] at (4.16,6.75){100\,nm};
\node[font=\sffamily\fontsize{5.2}{6.4}\selectfont,anchor=north,text=black!62] at (4.875,6.08){field of view from the scale bar};
\node[font=\sffamily\fontsize{7}{8}\selectfont\bfseries,anchor=north west] at (6.76,8.25){\textcolor{nppurple}{\textbullet}\ Spectrum};
\draw[ax] (7.1499999999999995,6.55)--(9.6,6.55);
\draw[ax] (7.1499999999999995,6.55)--(7.1499999999999995,7.800000000000001);
\draw[nppurple,line width=1.1pt] (7.199999999999999,6.67) .. controls (7.6499999999999995,6.71) and (7.779999999999999,7.6) .. (7.9799999999999995,7.6) .. controls (8.18,7.6) and (8.33,6.7700000000000005) .. (8.68,6.75) .. controls (8.879999999999999,6.74) and (8.93,7.1) .. (9.129999999999999,7.07) .. controls (9.28,7.050000000000001) and (9.43,6.71) .. (9.58,6.69);
\fill[nppurple] (7.9799999999999995,7.6) circle (1.5pt);
\node[npgreen!55!black,font=\sffamily\fontsize{5.4}{6}\selectfont,anchor=south] at (7.9799999999999995,7.66){real maximum};
\draw[npred,line width=0.8pt,dash pattern=on 1.8pt off 1.4pt] (8.85,6.55)--(8.85,6.92);
\node[npred,font=\sffamily\fontsize{6}{7}\selectfont] at (8.85,7.03){$\times$};
\node[npred,font=\sffamily\fontsize{5.4}{6}\selectfont,anchor=south] at (8.899999999999999,7.12){claimed peak};
\node[font=\sffamily\fontsize{6}{7}\selectfont,anchor=north] at (8.4,6.5){wavenumber};
\node[font=\sffamily\fontsize{5.6}{7}\selectfont,rotate=90,anchor=south,black!65] at (7.05,7.15){intensity};
\node[font=\sffamily\fontsize{5.2}{6.4}\selectfont,anchor=north,text=black!62] at (8.174999999999999,6.08){a peak needs a local maximum};
\node[font=\sffamily\fontsize{7}{8}\selectfont\bfseries,anchor=north west] at (10.06,8.25){\textcolor{npgreen}{\textbullet}\ Convex hull};
\draw[ax] (10.48,6.55)--(12.9,6.55);
\draw[ax] (10.48,6.55)--(10.48,7.800000000000001);
\fill[npgreen!10] (10.65,7.42)--(11.25,6.75)--(12.05,6.9)--(12.780000000000001,7.36)--(12.780000000000001,6.55)--(10.65,6.55)--cycle;
\draw[npgreen,line width=1.0pt] (10.65,7.42)--(11.25,6.75)--(12.05,6.9)--(12.780000000000001,7.36);
\fill[npgreen] (10.65,7.42) circle (1.4pt);
\fill[npgreen] (11.25,6.75) circle (1.4pt);
\fill[npgreen] (12.05,6.9) circle (1.4pt);
\fill[npgreen] (12.780000000000001,7.36) circle (1.4pt);
\fill[npred] (11.5,7.5200000000000005) circle (1.7pt);
\draw[npred,line width=0.6pt,dash pattern=on 1.6pt off 1.2pt] (11.5,7.5200000000000005)--(11.5,6.83);
\node[npred,font=\sffamily\fontsize{5.4}{6}\selectfont,anchor=west] at (11.620000000000001,7.5600000000000005){claimed stable};
\node[font=\sffamily\fontsize{6}{7}\selectfont,anchor=north] at (11.700000000000001,6.5){composition};
\node[font=\sffamily\fontsize{5.6}{7}\selectfont,rotate=90,anchor=south,black!65] at (10.38,7.15){energy};
\node[font=\sffamily\fontsize{5.2}{6.4}\selectfont,anchor=north,text=black!62] at (11.475,6.08){stable only if $E_{\mathrm{hull}}\le\epsilon$};
\node[plab,anchor=west] at (-0.10,5.05){b)};
\node[font=\sffamily\fontsize{7.6}{9}\selectfont\bfseries,anchor=west] at (0.55,5.05){Each claim is tested against its own physics};
\node[data,text width=5.90cm,minimum height=0.62cm,align=justify,font=\sffamily\fontsize{5.7}{7}\selectfont\itshape,inner sep=3pt] (chain) at (3.42,4.28){\hyphenchar\font=-1\relax ``\dots the strongest reflection sits near $2\theta{=}33^\circ$, the particle is about $10\,\mu$m across, and the computed gap is small, so the phase is metallic.''};
\node[font=\sffamily\fontsize{5.6}{7}\selectfont,black!55,anchor=east] at (6.45,4.86){model reasoning chain};
\node[acc,text width=1.75cm,minimum height=0.52cm,align=center,font=\sffamily\fontsize{5.9}{7}\selectfont,inner sep=2pt] (cA) at (1.35,3.32){peak at $2\theta=33^\circ$};
\node[proc,fill=nppurple!10,draw=nppurple!55,text width=1.75cm,minimum height=0.62cm,align=center,font=\sffamily\fontsize{5.9}{7}\selectfont,inner sep=2pt] (oA) at (1.35,2.30){no reflection within\\$0.5^\circ$ of $33^\circ$};
\node[bad,text width=1.75cm,minimum height=0.42cm,align=center,font=\sffamily\fontsize{5.9}{7}\selectfont,inner sep=2pt] (vA) at (1.35,1.40){\textcolor{npred}{$\times$}\ violation};
\draw[flow](cA)--(oA);\draw[flow](oA)--(vA);
\draw[flow,black!45](chain.south)--(1.35,3.72)--(cA.north);
\node[acc,text width=1.75cm,minimum height=0.52cm,align=center,font=\sffamily\fontsize{5.9}{7}\selectfont,inner sep=2pt] (cB) at (3.42,3.32){feature $\approx10\,\mu$m};
\node[proc,fill=nppurple!10,draw=nppurple!55,text width=1.75cm,minimum height=0.62cm,align=center,font=\sffamily\fontsize{5.9}{7}\selectfont,inner sep=2pt] (oB) at (3.42,2.30){scale bar sets a\\$2.1\,\mu$m field of view};
\node[bad,text width=1.75cm,minimum height=0.42cm,align=center,font=\sffamily\fontsize{5.9}{7}\selectfont,inner sep=2pt] (vB) at (3.42,1.40){\textcolor{npred}{$\times$}\ violation};
\draw[flow](cB)--(oB);\draw[flow](oB)--(vB);
\draw[flow,black!45](chain.south)--(3.42,3.72)--(cB.north);
\node[acc,text width=1.75cm,minimum height=0.52cm,align=center,font=\sffamily\fontsize{5.9}{7}\selectfont,inner sep=2pt] (cC) at (5.49,3.32){phase is metallic};
\node[proc,fill=nppurple!10,draw=nppurple!55,text width=1.75cm,minimum height=0.62cm,align=center,font=\sffamily\fontsize{5.9}{7}\selectfont,inner sep=2pt] (oC) at (5.49,2.30){computed gap\\$0.02\,\mathrm{eV}\le0.05$};
\node[good,text width=1.75cm,minimum height=0.42cm,align=center,font=\sffamily\fontsize{5.9}{7}\selectfont,inner sep=2pt] (vC) at (5.49,1.40){\textcolor{npgreen!55!black}{\checkmark}\ consistent};
\draw[flow](cC)--(oC);\draw[flow](oC)--(vC);
\draw[flow,black!45](chain.south)--(5.49,3.72)--(cC.north);
\draw[nporange,line width=0.7pt](1.35,1.19)--(1.35,0.95);
\draw[flow,nporange](1.35,0.95)--(6.78,0.95)--(6.78,4.28)--(chain.east);
\node[font=\sffamily\fontsize{5.6}{7}\selectfont,black,anchor=north] at (5.10,0.86){violations returned (CGSV, diffraction and spectra only)};
\node[font=\sffamily\fontsize{5.5}{7}\selectfont,black!55,anchor=west,align=left] at (0.15,0.24){Schematic: the chain and the numbers illustrate the\\checks; they are not a model output.};
\begin{scope}[shift={(7.35,0)}]
\node[plab,anchor=west] at (-0.10,5.05){c)};
\node[font=\sffamily\fontsize{7.6}{9}\selectfont\bfseries,anchor=west] at (0.55,5.05){What the oracles actually catch};
\def\xb#1{2.62 + #1/80*2.55}
\node[font=\sffamily\fontsize{6}{8}\selectfont\itshape,black!65,anchor=west] at (0.02,4.52){Diffraction, 78 human-confirmed violations};
\node[font=\sffamily\fontsize{6}{8}\selectfont,anchor=east] at (2.52,4.12){gridline or scale misreading};
\fill[npblue]({\xb{0}},4.01) rectangle ({\xb{70.5}},4.23);
\node[font=\sffamily\fontsize{5.9}{7}\selectfont,anchor=west,text=black!65] at ({\xb{70.5}+0.08},4.12){70.5\%};
\node[font=\sffamily\fontsize{6}{8}\selectfont,anchor=east] at (2.52,3.76){peak misidentification};
\fill[npblue]({\xb{0}},3.65) rectangle ({\xb{14.1}},3.87);
\node[font=\sffamily\fontsize{5.9}{7}\selectfont,anchor=west,text=black!65] at ({\xb{14.1}+0.08},3.76){14.1\%};
\node[font=\sffamily\fontsize{6}{8}\selectfont,anchor=east] at (2.52,3.40){other};
\fill[black!35]({\xb{0}},3.29) rectangle ({\xb{15.4}},3.51);
\node[font=\sffamily\fontsize{5.9}{7}\selectfont,anchor=west,text=black!65] at ({\xb{15.4}+0.08},3.40){15.4\%};
\node[font=\sffamily\fontsize{6}{8}\selectfont\itshape,black!65,anchor=west] at (0.02,3.04){Microscopy, 93 human-confirmed violations};
\node[font=\sffamily\fontsize{6}{8}\selectfont,anchor=east] at (2.52,2.64){scale-bar unit error};
\fill[npslate]({\xb{0}},2.53) rectangle ({\xb{37.6}},2.75);
\node[font=\sffamily\fontsize{5.9}{7}\selectfont,anchor=west,text=black!65] at ({\xb{37.6}+0.08},2.64){37.6\%};
\node[font=\sffamily\fontsize{6}{8}\selectfont,anchor=east] at (2.52,2.28){unit conversion};
\fill[npslate]({\xb{0}},2.17) rectangle ({\xb{23.7}},2.39);
\node[font=\sffamily\fontsize{5.9}{7}\selectfont,anchor=west,text=black!65] at ({\xb{23.7}+0.08},2.28){23.7\%};
\node[font=\sffamily\fontsize{6}{8}\selectfont,anchor=east] at (2.52,1.92){scale-factor arithmetic};
\fill[npslate]({\xb{0}},1.81) rectangle ({\xb{14.0}},2.03);
\node[font=\sffamily\fontsize{5.9}{7}\selectfont,anchor=west,text=black!65] at ({\xb{14.0}+0.08},1.92){14.0\%};
\node[font=\sffamily\fontsize{6}{8}\selectfont,anchor=east] at (2.52,1.56){other};
\fill[black!35]({\xb{0}},1.45) rectangle ({\xb{24.7}},1.67);
\node[font=\sffamily\fontsize{5.9}{7}\selectfont,anchor=west,text=black!65] at ({\xb{24.7}+0.08},1.56){24.7\%};
\draw[black!45,line width=0.5pt]({\xb{0}},1.32)--({\xb{80}},1.32);
\draw[black!45,line width=0.4pt]({\xb{0}},1.32)--({\xb{0}},1.25);
\node[font=\sffamily\fontsize{6}{7}\selectfont,anchor=north] at ({\xb{0}},1.23){0\%};
\draw[black!45,line width=0.4pt]({\xb{20}},1.32)--({\xb{20}},1.25);
\node[font=\sffamily\fontsize{6}{7}\selectfont,anchor=north] at ({\xb{20}},1.23){20\%};
\draw[black!45,line width=0.4pt]({\xb{40}},1.32)--({\xb{40}},1.25);
\node[font=\sffamily\fontsize{6}{7}\selectfont,anchor=north] at ({\xb{40}},1.23){40\%};
\draw[black!45,line width=0.4pt]({\xb{60}},1.32)--({\xb{60}},1.25);
\node[font=\sffamily\fontsize{6}{7}\selectfont,anchor=north] at ({\xb{60}},1.23){60\%};
\draw[black!45,line width=0.4pt]({\xb{80}},1.32)--({\xb{80}},1.25);
\node[font=\sffamily\fontsize{6}{7}\selectfont,anchor=north] at ({\xb{80}},1.23){80\%};
\node[font=\sffamily\fontsize{6.2}{8}\selectfont,anchor=north] at ({\xb{40}},0.92){share of confirmed violations};
\node[font=\sffamily\fontsize{5.5}{7}\selectfont,black!55,anchor=west,align=left] at (0.02,0.24){Spectra and the DFT families are excluded: their human\\reviews predate the final oracle definitions.};
\end{scope}
\end{tikzpicture}}
\caption{\justifying\textbf{Physics is the label.} \textbf{a)} Materials data fixes quantities that a stated claim must respect: which $2\theta$ can carry a reflection, through Bragg's law and the reference pattern of the named phase; the field of view, through the calibration a scale bar sets; whether a claimed spectral peak coincides with a genuine local maximum of sufficient prominence; and whether a composition lies within tolerance of the computed convex hull. Each constraint is checkable with no human annotation. \textbf{b)} A chain is parsed into its checkable claims and each claim is tested against the oracle for its own constraint type, returning a per-claim verdict rather than a single score for the answer; detected violations are returned to the model by Constraint-Grounded Self-Verification, which is applied to the diffraction and spectra constraint types only (Methods). The chain and the values shown are schematic and illustrate the checks; they are not a model output. \textbf{c)} The violations these oracles flag are legible physical mistakes, not parsing artifacts. For the two oracle families whose human review postdates the final oracle definitions, an open-ended categorization of every confirmed violation gives a small taxonomy: diffraction failures are dominated by misreading the gridlines or the axis scale, and microscopy failures by scale-bar unit and unit-conversion errors, rather than by errors in the arithmetic that follows. Percentages are shares of the 78 and 93 confirmed violations respectively (Methods).}
\label{fig:teaser}
\end{figure}

\section{Results}
\emph{All quantitative results below are measured, not projected. DFT-grounded
numbers (stability, metallicity, and magnetism-magnitude constraints, and every column or
cell marked with $^{*}$) rest on Materials Project reference values.}

\subsection{Oracle validity: positive control and precision audit}
Every downstream number rests on the oracles firing on genuine physical violations
rather than on parsing artifacts, so we validated the oracles before running any model
(Table~\ref{tab:exp1}). In a positive control we injected synthetic violations into real
reference data at planted rates of 10\%, 30\%, and 50\% and measured the fraction the
oracle missed. All six oracle families recovered every planted violation at every rate
(maximum false-negative rate 0.0000), evidence that the oracles are complete on the
injected violation types; under the idealization that this completeness extends to
genuine violations (an oracle recall $\hat d=1$; Methods), Corollary~\ref{cor:onesided}
gives $\mathrm{PCR}\le\mathrm{PCR}^\star$ for the true consistency rate
$\mathrm{PCR}^\star$, so the measured PCR does not overstate it for these oracles. We then audited oracle precision by human review of
at least 100 real oracle-fired cases per family, drawn from Qwen2.5-VL-7B and Gemini 2.5
Flash Lite outputs on real data (the full real/false-positive/unsure breakdown for all six
families is shown in Supplementary Fig.~S1). For diffraction and microscopy the audited precision
was 77.2\% (XRD, $n=101$) and 88.6\% (SEM, $n=105$). The XRD
false positives were predominantly a tolerance effect, with 16 of 23 disagreed cases
within $1.0^\circ$ of a real reference peak, just outside the $0.5^\circ$ tolerance
(Fig.~\ref{fig:oracle-examples}a shows three such real cases). The Raman
spectrum oracle was subsequently revised, mid-project, from a windowed-maximum check to a
genuine-local-maximum-with-prominence check (Methods), specifically because that first
precision audit (72.8\%, $n=103$) showed the prominence criterion did not fully match human
judgement of a peak on noisy, baseline-drifting spectra. Authors completed a fresh
precision audit of the revised spectrum oracle, reviewing all $n=495$
oracle-fired cases produced under the corrected oracle: audited precision is
\textbf{92.7\%} (457 real violations, 36 false positives, 2 unsure, excluded from the
ratio). Root-causing the 36 false positives shows this is again predominantly a
\emph{tolerance} effect, not a semantic mismatch between the oracle's criterion and human
judgement: 34 of 36 (94\%) have a claimed peak within $10\,\mathrm{cm}^{-1}$ of a real,
independently detected local maximum, just outside the oracle's $5\,\mathrm{cm}^{-1}$
tolerance window (the same root cause already identified for diffraction's false
positives, at a different tolerance); only 2 of 36 are genuinely far from any real peak
($>30\,\mathrm{cm}^{-1}$). This is a substantially cleaner result than the pre-revision
oracle's audit, where most false positives had no real peak-like structure at the claimed
position at all -- direct evidence the revision (Methods) fixed a real semantic gap, not
just moved the same failure mode around (Fig.~\ref{fig:oracle-examples}b shows three
such real cases). For the three DFT-grounded families (stability, metallicity and
property magnitude) Table~\ref{tab:exp1} reports audited precision of 95.3\%, 94.7\% and
92.2\% at $n=107$, 114 and 115. This precision pass reviewed all 336 oracle-fired cases across the three DFT-grounded families (107 near-hull, 114 computed band-gap, and 115 net-magnetization cases). During the positive control we found and fixed one real oracle bug: the magnetization check compared the computed reference to
zero with exact floating-point equality, which misfired on 82 of 3000 real materials whose computed magnetization is near but not exactly zero, and we replaced it with an absolute-tolerance zero check.

\begin{figure}[!htbp]
    \centering

    \includegraphics[width=\linewidth,height=0.235\textheight,keepaspectratio]
    {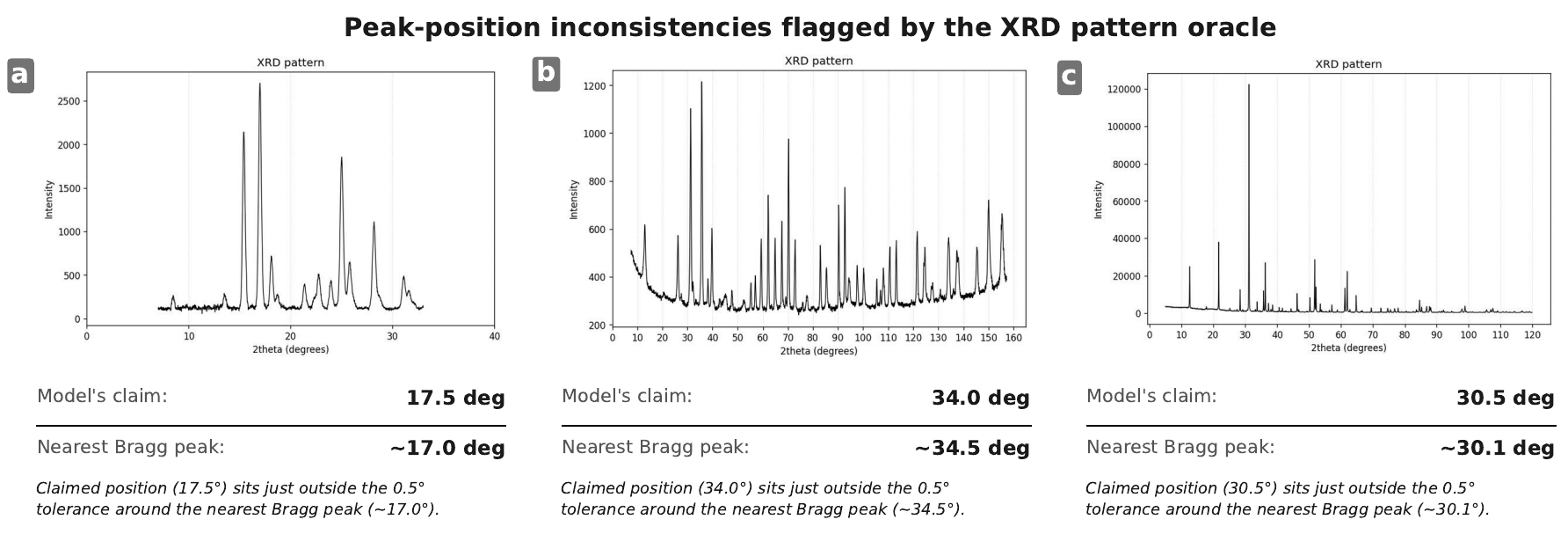}

    \vspace{1mm}

    \includegraphics[width=\linewidth,height=0.235\textheight,keepaspectratio]
    {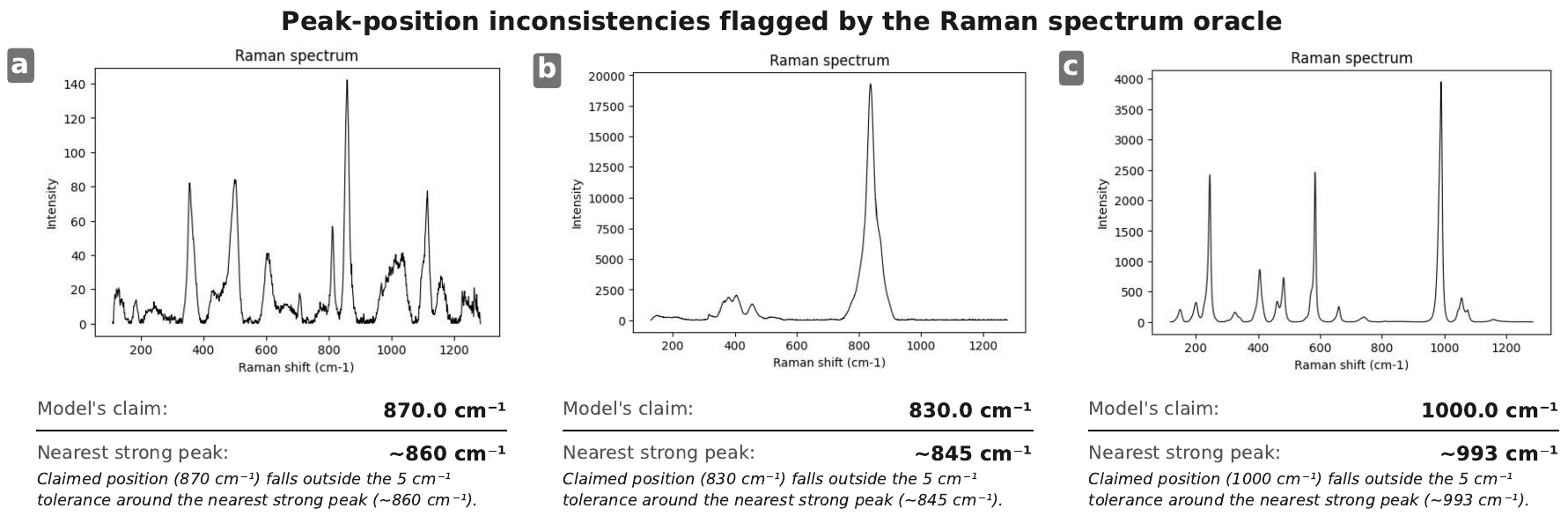}

    \vspace{1mm}

    \includegraphics[width=\linewidth,height=0.235\textheight,keepaspectratio]
    {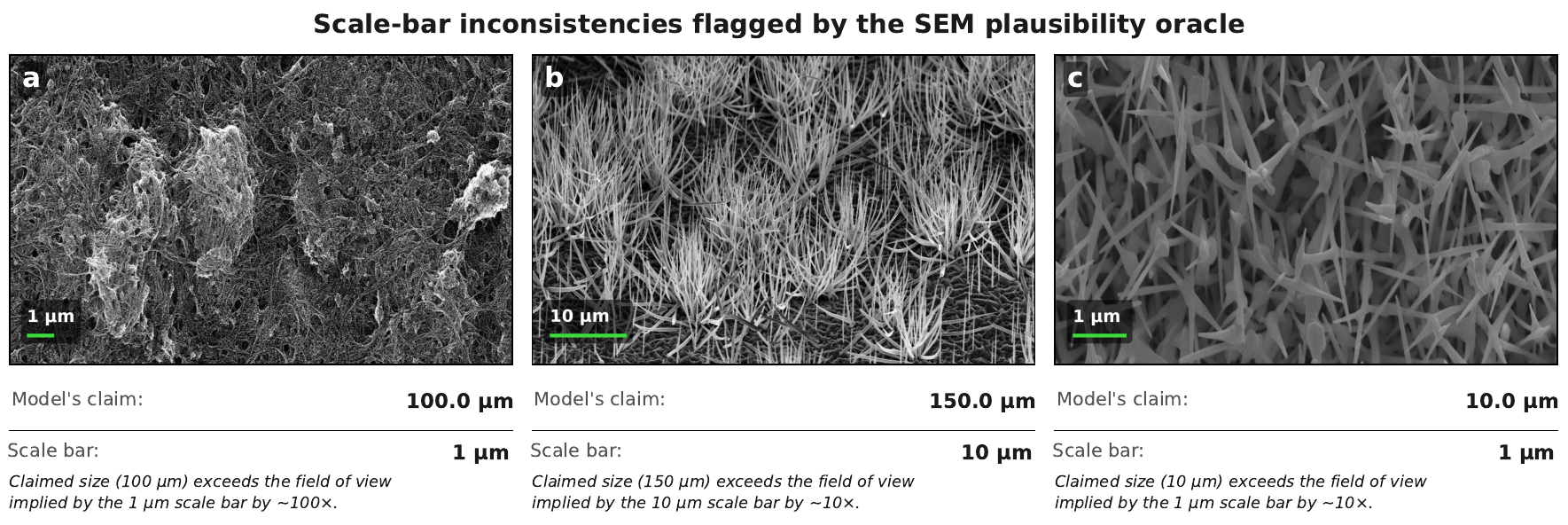}

    \caption{\justifying\textbf{Representative inconsistencies identified by the multimodal
    oracles.}
    \textbf{a}, X-ray diffraction cases in which the model-reported
    $2\theta$ position lies outside the $0.5^\circ$ tolerance around the nearest
    genuine Bragg reflection.
    \textbf{b}, Raman spectroscopy cases in which the model-reported Raman shift
    exceeds the $5\,\mathrm{cm}^{-1}$ tolerance relative to the nearest genuine
    spectral peak.
    \textbf{c}, Microscopy cases in which the model-reported feature size is
    inconsistent with the physical scale indicated by the embedded scale bar,
    producing discrepancies of approximately $10$--$100\times$.
    All examples are human-confirmed oracle activations and illustrate
    representative tolerance, unit-conversion, and scale-factor failure modes.}

    \label{fig:oracle-examples}
\end{figure}

\begin{table}[t]
\centering
\caption{\justifying\textbf{Oracle validity.} The positive-control
false-negative rate is the maximum across three planted violation rates (10\%, 30\%, and 50\%); all three rates yielded 0.0000 for every oracle family. Audited precision is the fraction of oracle-fired cases from Qwen2.5-VL-7B and Gemini 2.5 Flash Lite outputs that human review confirmed as genuine violations of the stated constraint rather than parsing or oracle artifacts. The spectrum result reflects the revised
local-maximum-with-prominence oracle (Methods), evaluated over all 495 oracle-fired cases; the pre-revision result (72.8\%, $n=103$) is reported only for transparency. The three Materials Project-grounded results were
obtained using the final operational definitions: near-hull classification, computed band-gap classification with an abstention band, and
net-magnetization magnitude.}
\label{tab:exp1}
{\scriptsize\setlength{\tabcolsep}{5pt}\renewcommand{\arraystretch}{1.12}
\begin{tabular}{l l S[table-format=1.4] c}
\toprule
Oracle family & Data source & {FN rate} & {Audited precision} \\
\cmidrule(lr){3-3}\cmidrule(lr){4-4}
& & {\itshape positive control} & {\itshape human audit} \\
\midrule
\multicolumn{4}{l}{\itshape Geometric oracles (gate passed)}\\
Diffraction (XRD)       & opXRD                & 0.0000 & 77.2\%\ \ ($n{=}101$) \\
Microscopy (SEM)        & NFFA-EUROPE          & 0.0000 & 88.6\%\ \ ($n{=}105$) \\
Spectra (Raman)         & RRUFF                & 0.0000 & 92.7\%\ \ ($n{=}495$)$^{\dagger}$ \\
\addlinespace
\multicolumn{4}{l}{\itshape DFT-grounded oracles}\\
Stability (hull)        & Materials Project    & 0.0000 & 95.3\%\ \ ($n{=}107$) \\
Metallicity (band gap)  & Materials Project    & 0.0000 & 94.7\%\ \ ($n{=}114$) \\
Magnetism (magnitude)   & Materials Project    & 0.0000 & 92.2\%\ \ ($n{=}115$) \\
\bottomrule
\end{tabular}}
\end{table}
{\scriptsize $^{\dagger}$Revised oracle (genuine local-maximum-with-prominence check);
supersedes an earlier pre-revision figure of 72.8\% ($n=103$), see Results.}

\subsection{Physical consistency of current models}
We measure the Physical-Consistency Rate and per-constraint-type violation rates for
the four originally evaluated models, four later-generation frontier or open models, and
one flagship-tier addition
across diffraction, microscopy, spectra, and
DFT-grounded constraints, separately for image (VLM) and structure or text (LLM)
inputs (Table~\ref{tab:main}, Fig.~\ref{fig:results}a), $n=30$ items per (model, constraint-type) cell per seed, $S=3$ seeds
pooled per cell (4{,}472 checkable records in total across the nine models). These numbers
supersede an earlier pass: the spectrum oracle was revised mid-project from a
windowed-maximum check to a genuine-local-maximum-with-prominence check (Methods), which
lowered every model's spectra column substantially, and the magnetism oracle was
redesigned to a per-atom-normalized magnitude check that never compares sign (Methods);
both changes are oracle corrections, not model regressions, and every number below
reflects the corrected oracles. Overall PCR ranges from 0.555
(Qwen3-VL-8B-Instruct) to 0.774 (GPT-5.6-Terra); models are
markedly more consistent on microscopy (0.67--1.00) than on spectra (0.07--0.83) or
diffraction (0.47--0.82), and the DFT-grounded column is more tightly clustered
(0.64--0.74) than any single geometric constraint, without one constraint type standing
out as uniformly hardest once the corrected oracles are applied. The later-released API models do not uniformly outperform their
earlier siblings: GPT-5.6-Terra improves clearly over GPT-4o
overall (0.774 vs.\ 0.687, bootstrap 95\% CIs $[0.736,0.810]$ vs.\ $[0.643,0.728]$, not
overlapping), but Claude Opus 4.6 does not improve over Claude Sonnet 4.5 (0.655 vs.\
0.656, CIs overlapping), and the reasoning-mode Qwen3-VL-8B-Thinking does not improve over
its non-reasoning sibling (0.579 vs.\ 0.555, CIs overlapping). Explicit reasoning
mode alone therefore does not reliably raise PCR in this sample. The locally run model is the weakest on microscopy
(0.667, 95\% CI $[0.563,0.759]$); six of the other seven models exceed it with
non-overlapping intervals (0.885--1.000), while Qwen3-VL-8B-Instruct at 0.802
($[0.709,0.884]$) does not separate from it. We do not yet know whether the gap
reflects genuine scale-bar-reading skill or a difference in how each model handles the
info-bar layout variety in the source micrographs, and do not claim more than the
measured difference. The high absolute microscopy rates should also be read against the
looseness of this oracle, which flags only feature sizes exceeding the field of view (a
necessary, not sufficient, condition), so a near-ceiling rate partly reflects an easily
satisfied check.

\begin{table}[t]
\centering
\caption{\justifying Main evaluation: Physical-Consistency Rate by constraint type and overall,
$n=30$ items per (model, constraint-type) cell per seed, $S=3$ seeds pooled per cell
(bootstrap 95\% CIs omitted from this compact view; see Supplementary Table~S1 for the
full per-cell confidence intervals, and the Methods for why $\rho_t=1-\mathrm{PCR}$ here). DFT column combines the stability, metallicity,
and magnitude constraint types.  The first four rows are the
originally evaluated models; the next four are the later-generation refresh
(Methods, Baselines subsection). $^{\ddagger}$GPT-5.6-Sol is OpenAI's flagship tier, added
to test whether it exceeds the balanced-tier GPT-5.6-Terra above within the same
generation; its All-column bootstrap 95\% CI, $[0.731,0.806]$, overlaps Terra's
$[0.736,0.810]$, so flagship does not separate from balanced despite leading on
diffraction alone (Discussion).}
\label{tab:main}
{\scriptsize\setlength{\tabcolsep}{5pt}\renewcommand{\arraystretch}{1.12}
\begin{tabular}{l *{5}{S[table-format=1.3]}}
\toprule
& \multicolumn{5}{c}{Physical-Consistency Rate (higher is more consistent)} \\
\cmidrule(lr){2-6}
Model & {Diffraction} & {Microscopy} & {Spectra} & {DFT$^{*}$} & {All$^{*}$} \\
\midrule
Qwen2.5-VL-7B \textit{(open)} & 0.500 & 0.667 & 0.148 & 0.685 & 0.565 \\
Gemini 2.5 Flash Lite & 0.517 & \bfseries 1.000 & 0.278 & 0.641 & 0.622 \\
GPT-4o                & 0.625 & 0.954 & 0.068 & 0.717 & 0.687 \\
Claude Sonnet 4.5     & 0.547 & 0.977 & 0.189 & \bfseries 0.736 & 0.656 \\
\addlinespace
Claude Opus 4.6            & 0.578 & 0.954 & 0.244 & 0.714 & 0.655 \\
GPT-5.6-Terra              & 0.797 & 0.951 & \bfseries 0.833 & 0.694 & \bfseries 0.774 \\
Qwen3-VL-8B-Instruct       & 0.468 & 0.802 & 0.133 & 0.642 & 0.555 \\
Qwen3-VL-8B-Thinking       & 0.578 & 0.885 & 0.092 & 0.639 & 0.579 \\
\addlinespace
GPT-5.6-Sol$^{\ddagger}$   & \bfseries 0.820 & 0.915 & 0.764 & 0.715 & 0.770 \\
\bottomrule
\end{tabular}}
\end{table}

\subsection{Accuracy versus consistency}
On three constraint types with a genuine, non-circular accuracy criterion distinct
from the oracle's own tolerance (diffraction: agreement with the single nearest true
peak to $0.1^{\circ}$, versus the oracle's looser $0.5^{\circ}$ tolerance; spectra:
agreement with the nearest real measured peak to $2\,\mathrm{cm}^{-1}$, versus the
oracle's prominence-based check; stability: agreement with the Materials
Project \texttt{is\_stable} label, evaluated separately from the oracle's
operational $0.05\,\mathrm{eV\,atom^{-1}}$ near-hull threshold), we report
accuracy alongside PCR and the four-cell breakdown on the same
920 items ($n=30$ per (model, constraint-type) cell, $S=3$ seeds; recomputed directly from
\texttt{experiments/exp3\_results.jsonl}, which supersedes an earlier 918-item run). This
figure is not directly comparable to Supplementary Table~S1's 923 checkable records for the
same four models' diffraction, spectra and stability cells: that table reports Experiment 2's
main PCR sweep, a separately drawn sample for a different purpose, not this accuracy-versus-
consistency analysis; the two are expected to differ and are not a reconciliation target for
each other. Microscopy is excluded: its
prompt elicits only a feature-size claim, so no non-circular accuracy ground truth is
available without a different prompt design. These numbers reflect the corrected
spectrum oracle (Methods) and supersede an earlier pass in which the gap was reported
differently. Across all 12 (model, constraint-type) cells,
accuracy and PCR disagree on a mean of 22.2\% of items, confirming that PCR is not
answer accuracy under another name (Fig.~\ref{fig:pipeline}b,c). Diffraction is dominated by items
that are wrong by the tight accuracy criterion yet still oracle-consistent (up to 36
of 62 items in one diffraction cell, gap 58.1\%): models are frequently close enough to satisfy the
oracle's looser tolerance while measurably off the exact peak. Spectra, under the corrected
oracle, is now dominated by the opposite pattern from before the correction: most
disagreement is items wrong by both criteria (WI) rather than wrong-but-consistent (WC),
so the accuracy-consistency gap for spectra is small (4.2--11.2\% across models) rather
than the large WC-dominated gap the pre-correction oracle produced; the earlier framing of
spectra as the largest-gap constraint no longer holds under the corrected oracle. Stability
still shows its own distinct pattern, dominated by items that are classified differently by the Materials Project \texttt{is\_stable} label
and the operational near-hull threshold, a direct
consequence of the two definitions of stability disagreeing on a substantial share of real
materials: on a random sample of 3,000 real Materials Project entries, the oracle's
$\epsilon=0.05\,\mathrm{eV\,atom^{-1}}$ tolerance and Materials Project's own strict
\texttt{is\_stable} label disagree on 571 (19.0\%); its gap (18.0--27.8\% across models) is now
comparable to, rather than smaller than, diffraction's. One model (Qwen2.5-VL-7B) shows a raw stability accuracy
of 81/82 (0.988) on the unstable class but 0/8 on the small stable class, the clearest case of the base-rate confound: because the sampled materials skew
toward the unstable class, near-ceiling raw accuracy reflects the base rate rather than
discriminative skill. Balanced accuracy (the mean of per-class accuracy, correcting for
this skew) is 0.494 (Qwen2.5-VL-7B), 0.681 (Gemini 2.5 Flash Lite), 0.476 (GPT-4o), and
0.971 (Claude Sonnet 4.5). Three of four models fall to at or near chance once the
class imbalance is corrected for, and only Claude Sonnet 4.5's apparent skill survives
this correction ($n=90$ per model, 5--8 stable and 82--85 unstable materials per model's
sample).

\subsection{Effect of self-verification}
We measure PCR before and after CGSV on identical items and test the change with the
exact paired McNemar test, reporting fixes, regressions, and residual violations
(Table~\ref{tab:cgsv}, Fig.~\ref{fig:results}b), on diffraction and spectra, the two constraint types where
the oracle's feedback is an actionable physical constraint the model can re-examine
without the feedback message itself revealing the answer (a scale-bar violation
message would state the conversion factor outright, and the three DFT claim types are
single-shot binary judgments with no intermediate quantity to reconsider). $n=30$
items per (model, constraint-type) cell, $S=3$ seeds, $K=2$ maximum revision rounds, 545
checkable records in total (superseding an earlier 555-record pass under the
pre-correction spectrum oracle). CGSV never regresses a previously consistent item: the
regression count $b=0$ in every one of the 8 (model, constraint-type) cells and in the
pooled per-model totals. CGSV significantly raises PCR for all four models pooled across both
constraint types (Qwen2.5-VL-7B $0.338\!\to\!0.439$, $p=3.1\times10^{-5}$; Gemini 2.5
Flash Lite $0.521\!\to\!0.632$, $p=2.4\times10^{-4}$; GPT-4o
$0.434\!\to\!0.593$, $p=7.6\times10^{-6}$; Claude Sonnet 4.5 $0.373\!\to\!0.639$,
$p=4.6\times10^{-13}$); under the corrected spectrum oracle, absolute PCR is lower
throughout (tracking the main table's spectrum collapse) but the relative gain and its
significance are, if anything, stronger for every model than under the earlier oracle.
Residual violations remain in every cell after $K=2$ rounds (roughly 40--90\% of
items under the stricter corrected spectrum oracle), so CGSV measurably improves but does not eliminate violations, a positive
control on the oracles, not a guarantee of full correction.
Because CGSV is scored by the same oracles that supply its feedback, a rise in PCR is
consistent both with the model correcting the flagged physics and with it retracting or
hedging the flagged claim; separating these requires the number of checkable claims each
chain exposes before and after CGSV. We checked this directly: this benchmark's
single-claim-per-item design means every one of the 545 items exposed exactly one
checkable claim in every round it reached, before and after revision ($k_i=1$ for all $i$,
confirmed by comparing each item's number of recorded claim values against its number of
revision rounds; zero items exposed fewer claims after CGSV than before). Because $k_i$
never changes, a PCR gain here cannot be explained by the model dropping a checkable claim;
it must reflect the claim's stated value changing to one the oracle judges consistent. This
rules out retraction in the sense of claim removal. It does not rule out retreat: a model
could move to a value that is trivially admissible rather than to the physically right one,
which the oracle, being a necessary-condition check, would accept either way. Distinguishing
those two would require the post-CGSV accuracy of the revised claim. We computed it, applying
the same non-circular accuracy criteria as the accuracy-versus-consistency analysis
($0.1^\circ$ for diffraction, $2\,\mathrm{cm}^{-1}$ for spectra) to each item's final-round
claim: post-CGSV accuracy is 29.8\% on diffraction ($n=252$) and 9.9\% on spectra ($n=293$),
both far below the corresponding post-CGSV PCR (Table~\ref{tab:cgsv}). Most of the CGSV gain
is therefore retreat, not correction: the model moves to a value the oracle's necessary
condition accepts far more often than it moves to the value that is actually right. CGSV
demonstrably reduces oracle-detected violations; this result shows it does not, on the
evidence here, demonstrably improve the underlying physical correctness of the claim. We separately isolate whether this gain is
attributable to the oracle signal specifically, or to extra inference-time compute alone,
with three controls matched to CGSV's items, models, and decoding
(Baselines subsection): unaided self-refinement (the model is told to double-check its
answer with no oracle feedback) shows no significant change (before $0.349$, after $0.339$, $n=542$, $b=34$, $c=29$, $p=0.615$);
generic equal-compute re-prompting (the model is simply asked to reconsider) likewise shows
no significant change (before $0.358$, after $0.342$, $n=562$, $b=24$, $c=15$, $p=0.20$); both controls, unlike CGSV
itself, show nonzero regressions ($b>0$). Best-of-3 independent resampling gives a
random-pick PCR of $0.363$ and an oracle-oracle upper bound (best of 3) of $0.419$
($n=614$). Both fall below CGSV's oracle-guided after-PCR for every model, though the
margin over the weakest, Qwen2.5-VL-7B at $0.439$, is only $0.02$, and the resampling run
draws its own item sample, so we do not test the difference. Together these
controls support attributing CGSV's gain to the physical-consistency signal specifically,
not to generic self-reflection or additional sampling at equal compute.

\begin{table}[t]
\centering
\caption{\justifying Effect of Constraint-Grounded Self-Verification, pooled across diffraction
and spectra, $n=30$ items per (model, constraint-type) cell, $S=3$ seeds, $K=2$ rounds,
under the corrected spectrum oracle (Methods).
$\Delta$: PCR gain (after minus before); $b$: regressions (before-consistent,
after-inconsistent); $c$: fixes (before-inconsistent, after-consistent); $p$: exact
two-sided McNemar test on $(b,c)$; $p_{\mathrm{Holm}}$: Holm-Bonferroni-adjusted $p$ over
this table's 6-test family (Methods, Statistics paragraph;
\texttt{experiments/exp15\_multiplicity\_correction.py}). All four CGSV effects survive
this strictest correction; both controls remain non-significant.}
\label{tab:cgsv}
{\scriptsize\setlength{\tabcolsep}{4pt}\renewcommand{\arraystretch}{1.12}
\begin{tabular}{l S[table-format=1.3] S[table-format=1.3] S[table-format=+1.3] S[table-format=1.0] S[table-format=2.0] c c}
\toprule
& \multicolumn{3}{c}{PCR} & \multicolumn{2}{c}{Discordant} & & \\
\cmidrule(lr){2-4}\cmidrule(lr){5-6}
Model & {before} & {after} & {$\Delta$} & {$b$} & {$c$} & {$p$} & {$p_{\mathrm{Holm}}$} \\
\midrule
Qwen2.5-VL-7B         & 0.338 & 0.439 & 0.101 & 0 & 16 & $3.1\times10^{-5}$ & $1.2\times10^{-4}$ \\
Gemini 2.5 Flash Lite & 0.521 & 0.632 & 0.111 & 0 & 13 & $2.4\times10^{-4}$ & $7.3\times10^{-4}$ \\
GPT-4o                & 0.434 & 0.593 & \bfseries 0.159 & 0 & 18 & $7.6\times10^{-6}$ & $3.8\times10^{-5}$ \\
Claude Sonnet 4.5     & 0.373 & 0.639 & \bfseries 0.266 & 0 & 42 & $4.6\times10^{-13}$ & $2.7\times10^{-12}$ \\
\midrule
\multicolumn{8}{l}{\itshape Mechanism-isolation controls (pooled across models, both constraint types)}\\
Self-refinement (no oracle) & 0.349 & 0.339 & -0.010 & 34 & 29 & 0.615 & 0.615 \\
Generic re-prompting        & 0.358 & 0.342 & -0.016 & 24 & 15 & 0.20 & 0.399 \\
\bottomrule
\end{tabular}}
\end{table}

\subsection{Verifier for violation prediction}
We fine-tune Qwen2.5-0.5B-Instruct with LoRA on oracle-derived labels alone (no human
annotation) to predict, from the input description and chain text alone, whether a
physical violation is present, using 2{,}152 real oracle-checked records spanning all
six constraint types collected across the preceding evaluations (superseding an earlier
2{,}416-record pass under the pre-correction spectrum oracle). We report precision,
recall, $F_1$, ROC AUC, and expected calibration error (ECE) on an 80/20
in-distribution split and separately on six held-out-constraint-type splits (train on the
other five constraint types, test on one entirely unseen one). The in-distribution split
is disjoint at the parent source (item/structure) level, not the record level: an earlier
pass split at the record level, which let seed replicates and same-chain claims of one item
land on both sides of the split and leaked item identity into the reported in-distribution
number; we rebuilt the dataset to retain a source identifier and re-split so that no source
item appears on both sides. In distribution the
verifier is a useful
detector (precision 0.77, recall 0.71, $F_1=0.74$, AUC $=0.850$, ECE $=0.096$, close to but
slightly below the pre-fix leaky estimate of AUC $=0.858$, as expected once the leak is
closed). Its
cross-constraint-type generalization is poor for all six constraint types under the
source-disjoint split: diffraction, microscopy, and stability
are indistinguishable from chance when held out entirely (AUC 0.50, 0.53, and 0.44,
bootstrap 95\% confidence intervals straddling 0.5), spectra is not separable from chance either (AUC
0.56, CI $[0.488,0.626]$); and metallicity is the single arguable exception, with a nominal
95\% interval that excludes chance (AUC 0.58, CI $[0.510,0.652]$) but that no longer does so
once the interval is widened for the six held-out comparisons (Bonferroni-corrected interval
$[0.487,0.673]$). We therefore treat metallicity as a nominal, uncorrected exception rather
than as evidence of transfer, and do not build any claim on it. Magnetism, previously reported (under
the leaky, pre-correction split) as generalizing as well held-out as in distribution
(AUC $=0.867$), does \emph{not} survive the corrected, source-disjoint split: its held-out
AUC is $0.503$ (95\% CI $[0.430,0.576]$), squarely at chance. We independently confirmed
this under both the held-out-modality and held-out-constraint-type variants of the split
(the two coincide in this benchmark, since each oracle family is its own constraint type),
which reproduced the same chance-level result rather than the earlier exception. We
therefore retract the earlier magnetism-exception claim: it was very likely an artifact of
the record-level split allowing same-source records on both sides of train/test, not a
real transferable signal. The corrected finding is that no held-out constraint type shows
transfer that survives correction for the six comparisons. We report the verifier as a useful
in-distribution tool only, and explicitly do \emph{not} claim it as a general-purpose,
constraint-agnostic violation detector. A pooled AUC across all six held-out constraint
types is not reported: the six held-out runs are separately fine-tuned models (not one
model sliced six ways), so their raw probability outputs are not on a directly comparable
scale, and naively concatenating them before computing one AUC conflates genuine signal
with arbitrary cross-model scale differences; we report only the six individually
well-defined per-constraint-type numbers (Table~\ref{tab:verifier}, Fig.~\ref{fig:results}c).

\begin{table}[t]
\centering
\caption{\justifying Verifier precision/recall/$F_1$ (in-distribution and held-out-constraint-type
splits, both source-disjoint) and the detector's bootstrap 95\% CI on ROC AUC, predicting a
real oracle violation from held-out verifier scores. Expected calibration error is
reported in the text for the in-distribution split (ECE $=0.096$). No pooled
across-constraint-type AUC is reported (see text).}
\label{tab:verifier}
{\scriptsize\setlength{\tabcolsep}{5pt}\renewcommand{\arraystretch}{1.12}
\begin{tabular}{l S[table-format=1.3] S[table-format=1.3] S[table-format=1.3] S[table-format=1.3] c}
\toprule
& \multicolumn{3}{c}{Violation prediction} & \multicolumn{2}{c}{Detector (ROC)} \\
\cmidrule(lr){2-4}\cmidrule(lr){5-6}
Split & {Precision} & {Recall} & {$F_1$} & {AUC} & {95\% CI} \\
\midrule
\multicolumn{6}{l}{\itshape In distribution (80/20 split, source-disjoint)}\\
All constraint types & 0.768 & 0.706 & 0.735 & \bfseries 0.850 & {[}0.813, 0.883{]} \\
\addlinespace
\multicolumn{6}{l}{\itshape Held out entirely (train on the other five constraint types)}\\
Diffraction (XRD)      & 0.600 & 0.030 & 0.057 & 0.497 & {[}0.419, 0.578{]} \\
Microscopy (SEM)       & 0.239 & 0.905 & 0.378 & 0.527 & {[}0.468, 0.589{]} \\
Spectra (Raman)        & {--} & 0.000 & {--} & 0.558 & {[}0.488, 0.626{]} \\
Stability$^{*}$        & 0.355 & 0.103 & 0.159 & 0.436 & {[}0.370, 0.502{]} \\
Metallicity$^{*}$      & 0.486 & 0.158 & 0.238 & 0.582 & {[}0.510, 0.652{]} \\
Magnetism$^{*}$        & 0.468 & 0.635 & 0.539 & 0.503 & {[}0.430, 0.576{]} \\
\bottomrule
\end{tabular}}
\end{table}

\begin{figure}[tbp]
\centering
\adjustbox{max size={0.975\linewidth}{0.62\textheight}}{%
\begin{tikzpicture}[font=\sffamily]
\node[plab,anchor=west] at (-0.10,7.62){a)};
\node[font=\sffamily\small\bfseries,anchor=west] at (0.55,7.62){Physical consistency by model and constraint type};
\draw[black!45,line width=0.5pt](2.57,7.10)--(6.89,7.10);
\draw[black!45,line width=0.5pt](6.92,7.10)--(11.24,7.10);
\node[font=\sffamily\fontsize{6.5}{8}\selectfont,black!60,anchor=south] at (4.73,7.14){geometric oracles};
\node[font=\sffamily\fontsize{6.5}{8}\selectfont,black!60,anchor=south] at (9.08,7.14){DFT-grounded oracles};
\node[font=\sffamily\fontsize{6.8}{8}\selectfont,anchor=south] at (3.28,6.60){Diffraction};
\node[font=\sffamily\fontsize{6}{7}\selectfont,black!55,anchor=south] at (3.28,6.30){(XRD)};
\node[font=\sffamily\fontsize{6.8}{8}\selectfont,anchor=south] at (4.73,6.60){Microscopy};
\node[font=\sffamily\fontsize{6}{7}\selectfont,black!55,anchor=south] at (4.73,6.30){(SEM)};
\node[font=\sffamily\fontsize{6.8}{8}\selectfont,anchor=south] at (6.18,6.60){Spectra};
\node[font=\sffamily\fontsize{6}{7}\selectfont,black!55,anchor=south] at (6.18,6.30){(Raman)};
\node[font=\sffamily\fontsize{6.8}{8}\selectfont,anchor=south] at (7.63,6.60){Stability};
\node[font=\sffamily\fontsize{6}{7}\selectfont,black!55,anchor=south] at (7.63,6.30){$^{*}$};
\node[font=\sffamily\fontsize{6.8}{8}\selectfont,anchor=south] at (9.08,6.60){Metallicity};
\node[font=\sffamily\fontsize{6}{7}\selectfont,black!55,anchor=south] at (9.08,6.30){$^{*}$};
\node[font=\sffamily\fontsize{6.8}{8}\selectfont,anchor=south] at (10.53,6.60){Magnetism};
\node[font=\sffamily\fontsize{6}{7}\selectfont,black!55,anchor=south] at (10.53,6.30){$^{*}$};
\node[font=\sffamily\fontsize{6.8}{8}\selectfont\bfseries,anchor=south] at (12.36,6.60){Overall};
\node[font=\sffamily\fontsize{6}{7}\selectfont,black!55,anchor=south] at (12.36,6.30){(all six)};
\node[font=\sffamily\fontsize{6.8}{8}\selectfont,anchor=east] at (2.45,5.85){GPT-5.6-Terra$^{\dagger}$};
\node[minimum width=1.42cm,minimum height=0.56cm,fill=npblue!71,draw=white,line width=1pt,inner sep=0,font=\sffamily\fontsize{7.2}{8}\selectfont,text=white] at (3.28,5.85){0.797};
\node[minimum width=1.42cm,minimum height=0.56cm,fill=npblue!84,draw=white,line width=1pt,inner sep=0,font=\sffamily\fontsize{7.2}{8}\selectfont,text=white] at (4.73,5.85){0.951};
\node[minimum width=1.42cm,minimum height=0.56cm,fill=npblue!74,draw=white,line width=1pt,inner sep=0,font=\sffamily\fontsize{7.2}{8}\selectfont,text=white] at (6.18,5.85){0.833};
\node[minimum width=1.42cm,minimum height=0.56cm,fill=npblue!60,draw=white,line width=1pt,inner sep=0,font=\sffamily\fontsize{7.2}{8}\selectfont,text=white] at (7.63,5.85){0.656};
\node[minimum width=1.42cm,minimum height=0.56cm,fill=npblue!60,draw=white,line width=1pt,inner sep=0,font=\sffamily\fontsize{7.2}{8}\selectfont,text=white] at (9.08,5.85){0.659};
\node[minimum width=1.42cm,minimum height=0.56cm,fill=npblue!69,draw=white,line width=1pt,inner sep=0,font=\sffamily\fontsize{7.2}{8}\selectfont,text=white] at (10.53,5.85){0.767};
\node[minimum width=1.42cm,minimum height=0.56cm,fill=npblue!69,draw=black!45,line width=0.6pt,inner sep=0,font=\sffamily\fontsize{7.2}{8}\selectfont\bfseries,text=white] at (12.36,5.85){0.774};
\node[font=\sffamily\fontsize{6.8}{8}\selectfont,anchor=east] at (2.45,5.23){GPT-5.6-Sol$^{\ddagger}$};
\node[minimum width=1.42cm,minimum height=0.56cm,fill=npblue!73,draw=white,line width=1pt,inner sep=0,font=\sffamily\fontsize{7.2}{8}\selectfont,text=white] at (3.28,5.23){0.820};
\node[minimum width=1.42cm,minimum height=0.56cm,fill=npblue!81,draw=white,line width=1pt,inner sep=0,font=\sffamily\fontsize{7.2}{8}\selectfont,text=white] at (4.73,5.23){0.915};
\node[minimum width=1.42cm,minimum height=0.56cm,fill=npblue!69,draw=white,line width=1pt,inner sep=0,font=\sffamily\fontsize{7.2}{8}\selectfont,text=white] at (6.18,5.23){0.764};
\node[minimum width=1.42cm,minimum height=0.56cm,fill=npblue!67,draw=white,line width=1pt,inner sep=0,font=\sffamily\fontsize{7.2}{8}\selectfont,text=white] at (7.63,5.23){0.742};
\node[minimum width=1.42cm,minimum height=0.56cm,fill=npblue!61,draw=white,line width=1pt,inner sep=0,font=\sffamily\fontsize{7.2}{8}\selectfont,text=white] at (9.08,5.23){0.670};
\node[minimum width=1.42cm,minimum height=0.56cm,fill=npblue!66,draw=white,line width=1pt,inner sep=0,font=\sffamily\fontsize{7.2}{8}\selectfont,text=white] at (10.53,5.23){0.733};
\node[minimum width=1.42cm,minimum height=0.56cm,fill=npblue!69,draw=black!45,line width=0.6pt,inner sep=0,font=\sffamily\fontsize{7.2}{8}\selectfont\bfseries,text=white] at (12.36,5.23){0.770};
\node[font=\sffamily\fontsize{6.8}{8}\selectfont,anchor=east] at (2.45,4.62){GPT-4o};
\node[minimum width=1.42cm,minimum height=0.56cm,fill=npblue!57,draw=white,line width=1pt,inner sep=0,font=\sffamily\fontsize{7.2}{8}\selectfont,text=black] at (3.28,4.62){0.625};
\node[minimum width=1.42cm,minimum height=0.56cm,fill=npblue!84,draw=white,line width=1pt,inner sep=0,font=\sffamily\fontsize{7.2}{8}\selectfont,text=white] at (4.73,4.62){0.954};
\node[minimum width=1.42cm,minimum height=0.56cm,fill=npblue!12,draw=white,line width=1pt,inner sep=0,font=\sffamily\fontsize{7.2}{8}\selectfont,text=black] at (6.18,4.62){0.068};
\node[minimum width=1.42cm,minimum height=0.56cm,fill=npblue!72,draw=white,line width=1pt,inner sep=0,font=\sffamily\fontsize{7.2}{8}\selectfont,text=white] at (7.63,4.62){0.800};
\node[minimum width=1.42cm,minimum height=0.56cm,fill=npblue!61,draw=white,line width=1pt,inner sep=0,font=\sffamily\fontsize{7.2}{8}\selectfont,text=white] at (9.08,4.62){0.671};
\node[minimum width=1.42cm,minimum height=0.56cm,fill=npblue!62,draw=white,line width=1pt,inner sep=0,font=\sffamily\fontsize{7.2}{8}\selectfont,text=white] at (10.53,4.62){0.678};
\node[minimum width=1.42cm,minimum height=0.56cm,fill=npblue!62,draw=black!45,line width=0.6pt,inner sep=0,font=\sffamily\fontsize{7.2}{8}\selectfont\bfseries,text=white] at (12.36,4.62){0.687};
\node[font=\sffamily\fontsize{6.8}{8}\selectfont,anchor=east] at (2.45,4.00){Claude Sonnet 4.5};
\node[minimum width=1.42cm,minimum height=0.56cm,fill=npblue!51,draw=white,line width=1pt,inner sep=0,font=\sffamily\fontsize{7.2}{8}\selectfont,text=black] at (3.28,4.00){0.547};
\node[minimum width=1.42cm,minimum height=0.56cm,fill=npblue!86,draw=white,line width=1pt,inner sep=0,font=\sffamily\fontsize{7.2}{8}\selectfont,text=white] at (4.73,4.00){0.977};
\node[minimum width=1.42cm,minimum height=0.56cm,fill=npblue!21,draw=white,line width=1pt,inner sep=0,font=\sffamily\fontsize{7.2}{8}\selectfont,text=black] at (6.18,4.00){0.189};
\node[minimum width=1.42cm,minimum height=0.56cm,fill=npblue!68,draw=white,line width=1pt,inner sep=0,font=\sffamily\fontsize{7.2}{8}\selectfont,text=white] at (7.63,4.00){0.756};
\node[minimum width=1.42cm,minimum height=0.56cm,fill=npblue!63,draw=white,line width=1pt,inner sep=0,font=\sffamily\fontsize{7.2}{8}\selectfont,text=white] at (9.08,4.00){0.694};
\node[minimum width=1.42cm,minimum height=0.56cm,fill=npblue!68,draw=white,line width=1pt,inner sep=0,font=\sffamily\fontsize{7.2}{8}\selectfont,text=white] at (10.53,4.00){0.756};
\node[minimum width=1.42cm,minimum height=0.56cm,fill=npblue!60,draw=black!45,line width=0.6pt,inner sep=0,font=\sffamily\fontsize{7.2}{8}\selectfont\bfseries,text=white] at (12.36,4.00){0.656};
\node[font=\sffamily\fontsize{6.8}{8}\selectfont,anchor=east] at (2.45,3.39){Claude Opus 4.6$^{\dagger}$};
\node[minimum width=1.42cm,minimum height=0.56cm,fill=npblue!53,draw=white,line width=1pt,inner sep=0,font=\sffamily\fontsize{7.2}{8}\selectfont,text=black] at (3.28,3.39){0.578};
\node[minimum width=1.42cm,minimum height=0.56cm,fill=npblue!84,draw=white,line width=1pt,inner sep=0,font=\sffamily\fontsize{7.2}{8}\selectfont,text=white] at (4.73,3.39){0.954};
\node[minimum width=1.42cm,minimum height=0.56cm,fill=npblue!26,draw=white,line width=1pt,inner sep=0,font=\sffamily\fontsize{7.2}{8}\selectfont,text=black] at (6.18,3.39){0.244};
\node[minimum width=1.42cm,minimum height=0.56cm,fill=npblue!65,draw=white,line width=1pt,inner sep=0,font=\sffamily\fontsize{7.2}{8}\selectfont,text=white] at (7.63,3.39){0.722};
\node[minimum width=1.42cm,minimum height=0.56cm,fill=npblue!65,draw=white,line width=1pt,inner sep=0,font=\sffamily\fontsize{7.2}{8}\selectfont,text=white] at (9.08,3.39){0.721};
\node[minimum width=1.42cm,minimum height=0.56cm,fill=npblue!63,draw=white,line width=1pt,inner sep=0,font=\sffamily\fontsize{7.2}{8}\selectfont,text=white] at (10.53,3.39){0.700};
\node[minimum width=1.42cm,minimum height=0.56cm,fill=npblue!60,draw=black!45,line width=0.6pt,inner sep=0,font=\sffamily\fontsize{7.2}{8}\selectfont\bfseries,text=white] at (12.36,3.39){0.655};
\node[font=\sffamily\fontsize{6.8}{8}\selectfont,anchor=east] at (2.45,2.77){Gemini 2.5 Flash Lite};
\node[minimum width=1.42cm,minimum height=0.56cm,fill=npblue!48,draw=white,line width=1pt,inner sep=0,font=\sffamily\fontsize{7.2}{8}\selectfont,text=black] at (3.28,2.77){0.517};
\node[minimum width=1.42cm,minimum height=0.56cm,fill=npblue!88,draw=white,line width=1pt,inner sep=0,font=\sffamily\fontsize{7.2}{8}\selectfont,text=white] at (4.73,2.77){1.000};
\node[minimum width=1.42cm,minimum height=0.56cm,fill=npblue!29,draw=white,line width=1pt,inner sep=0,font=\sffamily\fontsize{7.2}{8}\selectfont,text=black] at (6.18,2.77){0.278};
\node[minimum width=1.42cm,minimum height=0.56cm,fill=npblue!66,draw=white,line width=1pt,inner sep=0,font=\sffamily\fontsize{7.2}{8}\selectfont,text=white] at (7.63,2.77){0.730};
\node[minimum width=1.42cm,minimum height=0.56cm,fill=npblue!64,draw=white,line width=1pt,inner sep=0,font=\sffamily\fontsize{7.2}{8}\selectfont,text=white] at (9.08,2.77){0.702};
\node[minimum width=1.42cm,minimum height=0.56cm,fill=npblue!47,draw=white,line width=1pt,inner sep=0,font=\sffamily\fontsize{7.2}{8}\selectfont,text=black] at (10.53,2.77){0.494};
\node[minimum width=1.42cm,minimum height=0.56cm,fill=npblue!57,draw=black!45,line width=0.6pt,inner sep=0,font=\sffamily\fontsize{7.2}{8}\selectfont\bfseries,text=black] at (12.36,2.77){0.622};
\node[font=\sffamily\fontsize{6.8}{8}\selectfont,anchor=east] at (2.45,2.16){Qwen3-VL-8B-Thinking$^{\dagger}$};
\node[minimum width=1.42cm,minimum height=0.56cm,fill=npblue!53,draw=white,line width=1pt,inner sep=0,font=\sffamily\fontsize{7.2}{8}\selectfont,text=black] at (3.28,2.16){0.578};
\node[minimum width=1.42cm,minimum height=0.56cm,fill=npblue!79,draw=white,line width=1pt,inner sep=0,font=\sffamily\fontsize{7.2}{8}\selectfont,text=white] at (4.73,2.16){0.885};
\node[minimum width=1.42cm,minimum height=0.56cm,fill=npblue!14,draw=white,line width=1pt,inner sep=0,font=\sffamily\fontsize{7.2}{8}\selectfont,text=black] at (6.18,2.16){0.092};
\node[minimum width=1.42cm,minimum height=0.56cm,fill=npblue!69,draw=white,line width=1pt,inner sep=0,font=\sffamily\fontsize{7.2}{8}\selectfont,text=white] at (7.63,2.16){0.767};
\node[minimum width=1.42cm,minimum height=0.56cm,fill=npblue!59,draw=white,line width=1pt,inner sep=0,font=\sffamily\fontsize{7.2}{8}\selectfont,text=black] at (9.08,2.16){0.644};
\node[minimum width=1.42cm,minimum height=0.56cm,fill=npblue!47,draw=white,line width=1pt,inner sep=0,font=\sffamily\fontsize{7.2}{8}\selectfont,text=black] at (10.53,2.16){0.506};
\node[minimum width=1.42cm,minimum height=0.56cm,fill=npblue!53,draw=black!45,line width=0.6pt,inner sep=0,font=\sffamily\fontsize{7.2}{8}\selectfont\bfseries,text=black] at (12.36,2.16){0.579};
\node[font=\sffamily\fontsize{6.8}{8}\selectfont,anchor=east] at (2.45,1.54){Qwen2.5-VL-7B};
\node[minimum width=1.42cm,minimum height=0.56cm,fill=npblue!47,draw=white,line width=1pt,inner sep=0,font=\sffamily\fontsize{7.2}{8}\selectfont,text=black] at (3.28,1.54){0.500};
\node[minimum width=1.42cm,minimum height=0.56cm,fill=npblue!61,draw=white,line width=1pt,inner sep=0,font=\sffamily\fontsize{7.2}{8}\selectfont,text=white] at (4.73,1.54){0.667};
\node[minimum width=1.42cm,minimum height=0.56cm,fill=npblue!18,draw=white,line width=1pt,inner sep=0,font=\sffamily\fontsize{7.2}{8}\selectfont,text=black] at (6.18,1.54){0.148};
\node[minimum width=1.42cm,minimum height=0.56cm,fill=npblue!67,draw=white,line width=1pt,inner sep=0,font=\sffamily\fontsize{7.2}{8}\selectfont,text=white] at (7.63,1.54){0.744};
\node[minimum width=1.42cm,minimum height=0.56cm,fill=npblue!50,draw=white,line width=1pt,inner sep=0,font=\sffamily\fontsize{7.2}{8}\selectfont,text=black] at (9.08,1.54){0.540};
\node[minimum width=1.42cm,minimum height=0.56cm,fill=npblue!69,draw=white,line width=1pt,inner sep=0,font=\sffamily\fontsize{7.2}{8}\selectfont,text=white] at (10.53,1.54){0.767};
\node[minimum width=1.42cm,minimum height=0.56cm,fill=npblue!52,draw=black!45,line width=0.6pt,inner sep=0,font=\sffamily\fontsize{7.2}{8}\selectfont\bfseries,text=black] at (12.36,1.54){0.565};
\node[font=\sffamily\fontsize{6.8}{8}\selectfont,anchor=east] at (2.45,0.92){Qwen3-VL-8B-Instruct$^{\dagger}$};
\node[minimum width=1.42cm,minimum height=0.56cm,fill=npblue!44,draw=white,line width=1pt,inner sep=0,font=\sffamily\fontsize{7.2}{8}\selectfont,text=black] at (3.28,0.92){0.468};
\node[minimum width=1.42cm,minimum height=0.56cm,fill=npblue!72,draw=white,line width=1pt,inner sep=0,font=\sffamily\fontsize{7.2}{8}\selectfont,text=white] at (4.73,0.92){0.802};
\node[minimum width=1.42cm,minimum height=0.56cm,fill=npblue!17,draw=white,line width=1pt,inner sep=0,font=\sffamily\fontsize{7.2}{8}\selectfont,text=black] at (6.18,0.92){0.133};
\node[minimum width=1.42cm,minimum height=0.56cm,fill=npblue!64,draw=white,line width=1pt,inner sep=0,font=\sffamily\fontsize{7.2}{8}\selectfont,text=white] at (7.63,0.92){0.713};
\node[minimum width=1.42cm,minimum height=0.56cm,fill=npblue!62,draw=white,line width=1pt,inner sep=0,font=\sffamily\fontsize{7.2}{8}\selectfont,text=white] at (9.08,0.92){0.688};
\node[minimum width=1.42cm,minimum height=0.56cm,fill=npblue!50,draw=white,line width=1pt,inner sep=0,font=\sffamily\fontsize{7.2}{8}\selectfont,text=black] at (10.53,0.92){0.533};
\node[minimum width=1.42cm,minimum height=0.56cm,fill=npblue!52,draw=black!45,line width=0.6pt,inner sep=0,font=\sffamily\fontsize{7.2}{8}\selectfont\bfseries,text=black] at (12.36,0.92){0.555};
\foreach \i in {0,...,39}{\pgfmathsetmacro{\ff}{6+\i*2.205}\fill[npblue!\ff] (4.3+\i*0.1,0.00) rectangle (4.3+\i*0.1+0.1,0.00+0.20);}
\draw[black!40,line width=0.4pt](4.3,0.00) rectangle (8.3,0.00+0.20);
\draw[black!40,line width=0.4pt](4.30,0.00)--(4.30,0.00-0.06);
\node[font=\sffamily\fontsize{6}{7}\selectfont,anchor=north] at (4.30,0.00-0.07){0.00};
\draw[black!40,line width=0.4pt](5.30,0.00)--(5.30,0.00-0.06);
\node[font=\sffamily\fontsize{6}{7}\selectfont,anchor=north] at (5.30,0.00-0.07){0.25};
\draw[black!40,line width=0.4pt](6.30,0.00)--(6.30,0.00-0.06);
\node[font=\sffamily\fontsize{6}{7}\selectfont,anchor=north] at (6.30,0.00-0.07){0.50};
\draw[black!40,line width=0.4pt](7.30,0.00)--(7.30,0.00-0.06);
\node[font=\sffamily\fontsize{6}{7}\selectfont,anchor=north] at (7.30,0.00-0.07){0.75};
\draw[black!40,line width=0.4pt](8.30,0.00)--(8.30,0.00-0.06);
\node[font=\sffamily\fontsize{6}{7}\selectfont,anchor=north] at (8.30,0.00-0.07){1.00};
\node[font=\sffamily\fontsize{6.5}{8}\selectfont,anchor=east] at (4.1499999999999995,0.00+0.10){Physical-Consistency Rate};
\node[font=\sffamily\fontsize{6}{7}\selectfont,black!55,anchor=west] at (8.600000000000001,0.00+0.10){$^{\dagger}$later-generation refresh\ \ $^{\ddagger}$flagship addition};
\begin{scope}[shift={(0,-7.17)}]
\node[plab,anchor=west] at (-0.10,5.62){b)};
\node[font=\sffamily\fontsize{7.6}{9}\selectfont\bfseries,anchor=west] at (0.52,5.62){Oracle feedback raises PCR};
\def\yb#1{0.62 + (#1-0.30)/0.40*3.85}
\draw[black!45,line width=0.5pt](1.05,0.62)--(1.05,4.47);
\node[font=\sffamily\fontsize{6.3}{7}\selectfont,anchor=east] at (0.99,{\yb{0.3}}){0.3};
\draw[black!45,line width=0.4pt](1.00,{\yb{0.3}})--(1.18,{\yb{0.3}});
\node[font=\sffamily\fontsize{6.3}{7}\selectfont,anchor=east] at (0.99,{\yb{0.4}}){0.4};
\draw[black!45,line width=0.4pt](1.00,{\yb{0.4}})--(1.18,{\yb{0.4}});
\node[font=\sffamily\fontsize{6.3}{7}\selectfont,anchor=east] at (0.99,{\yb{0.5}}){0.5};
\draw[black!45,line width=0.4pt](1.00,{\yb{0.5}})--(1.18,{\yb{0.5}});
\node[font=\sffamily\fontsize{6.3}{7}\selectfont,anchor=east] at (0.99,{\yb{0.6}}){0.6};
\draw[black!45,line width=0.4pt](1.00,{\yb{0.6}})--(1.18,{\yb{0.6}});
\node[font=\sffamily\fontsize{6.3}{7}\selectfont,anchor=east] at (0.99,{\yb{0.7}}){0.7};
\draw[black!45,line width=0.4pt](1.00,{\yb{0.7}})--(1.18,{\yb{0.7}});
\draw[black!10,line width=0.4pt](1.18,{\yb{0.4}})--(3.70,{\yb{0.4}});
\draw[black!10,line width=0.4pt](1.18,{\yb{0.5}})--(3.70,{\yb{0.5}});
\draw[black!10,line width=0.4pt](1.18,{\yb{0.6}})--(3.70,{\yb{0.6}});
\draw[black!10,line width=0.4pt](1.18,{\yb{0.7}})--(3.70,{\yb{0.7}});
\node[font=\sffamily\fontsize{6.5}{8}\selectfont,rotate=90,anchor=south] at (0.42,2.55){PCR};
\node[font=\sffamily\fontsize{6.8}{8}\selectfont,anchor=north] at (1.45,0.54){before};
\node[font=\sffamily\fontsize{6.8}{8}\selectfont,anchor=north] at (3.05,0.54){after};
\draw[black!45,line width=1.0pt,dash pattern=on 2pt off 1.4pt](1.45,{\yb{0.349}})--(3.05,{\yb{0.339}});
\fill[black!45](1.45,{\yb{0.349}})circle(2.0pt);\fill[black!45](3.05,{\yb{0.339}})circle(2.0pt);
\draw[black!45,line width=1.0pt,dash pattern=on 2pt off 1.4pt](1.45,{\yb{0.358}})--(3.05,{\yb{0.342}});
\fill[black!45](1.45,{\yb{0.358}})circle(2.0pt);\fill[black!45](3.05,{\yb{0.342}})circle(2.0pt);
\draw[npblue,line width=1.5pt](1.45,{\yb{0.338}})--(3.05,{\yb{0.439}});
\fill[npblue](1.45,{\yb{0.338}})circle(2.4pt);\draw[white,line width=0.5pt](1.45,{\yb{0.338}})circle(2.4pt);
\fill[npblue](3.05,{\yb{0.439}})circle(2.4pt);\draw[white,line width=0.5pt](3.05,{\yb{0.439}})circle(2.4pt);
\draw[nporange,line width=1.5pt](1.45,{\yb{0.521}})--(3.05,{\yb{0.632}});
\fill[nporange](1.45,{\yb{0.521}})circle(2.4pt);\draw[white,line width=0.5pt](1.45,{\yb{0.521}})circle(2.4pt);
\fill[nporange](3.05,{\yb{0.632}})circle(2.4pt);\draw[white,line width=0.5pt](3.05,{\yb{0.632}})circle(2.4pt);
\draw[npgreen,line width=1.5pt](1.45,{\yb{0.434}})--(3.05,{\yb{0.593}});
\fill[npgreen](1.45,{\yb{0.434}})circle(2.4pt);\draw[white,line width=0.5pt](1.45,{\yb{0.434}})circle(2.4pt);
\fill[npgreen](3.05,{\yb{0.593}})circle(2.4pt);\draw[white,line width=0.5pt](3.05,{\yb{0.593}})circle(2.4pt);
\draw[nppurple,line width=1.5pt](1.45,{\yb{0.373}})--(3.05,{\yb{0.639}});
\fill[nppurple](1.45,{\yb{0.373}})circle(2.4pt);\draw[white,line width=0.5pt](1.45,{\yb{0.373}})circle(2.4pt);
\fill[nppurple](3.05,{\yb{0.639}})circle(2.4pt);\draw[white,line width=0.5pt](3.05,{\yb{0.639}})circle(2.4pt);
\draw[nppurple!55,line width=0.4pt](3.20,{\yb{0.639}+0.34})--(3.11,{\yb{0.639}});
\node[font=\sffamily\fontsize{6.3}{7}\selectfont,anchor=west,text=nppurple] at (3.24,{\yb{0.639}+0.34}){Claude Sonnet 4.5$^{***}$};
\node[font=\sffamily\fontsize{6.3}{7}\selectfont,anchor=west,text=nporange] at (3.24,{\yb{0.632}-0.06}){Gemini 2.5 FL$^{***}$};
\node[font=\sffamily\fontsize{6.3}{7}\selectfont,anchor=west,text=npgreen] at (3.24,{\yb{0.593}-0.03}){GPT-4o$^{***}$};
\node[font=\sffamily\fontsize{6.3}{7}\selectfont,anchor=west,text=npblue] at (3.24,{\yb{0.439}}){Qwen2.5-VL-7B$^{***}$};
\node[font=\sffamily\fontsize{6.3}{7}\selectfont,anchor=west,text=black!55,align=left] at (3.24,{\yb{0.341}}){self-refinement\\generic re-prompt\\(both n.s.)};
\node[font=\sffamily\fontsize{5.8}{7}\selectfont,anchor=west,text=black!55] at (0.18,0.06){$^{***}$Holm-adjusted $p<10^{-3}$; $b=0$ in every CGSV cell};
\end{scope}
\begin{scope}[shift={(6.05,-7.17)}]
\node[plab,anchor=west] at (-0.10,5.62){c)};
\node[font=\sffamily\fontsize{7.6}{9}\selectfont\bfseries,anchor=west] at (0.52,5.62){Violation detection versus baselines};
\def\xa#1{2.55 + (#1-0.25)/0.65*4.05}
\draw[npslate!65,dash pattern=on 2pt off 1.6pt,line width=0.6pt]({\xa{0.5}},0.42)--({\xa{0.5}},5.05);
\node[font=\sffamily\fontsize{6.3}{8}\selectfont\itshape,black!65,anchor=west] at (0.02,5.12){In distribution};
\node[font=\sffamily\fontsize{6.5}{8}\selectfont,anchor=east] at (2.42,4.70){MatPCR verifier};
\draw[npgreen,line width=1.1pt]({\xa{0.813}},4.70)--({\xa{0.883}},4.70);
\draw[npgreen,line width=1.0pt]({\xa{0.813}},4.63)--({\xa{0.813}},4.77);
\draw[npgreen,line width=1.0pt]({\xa{0.883}},4.63)--({\xa{0.883}},4.77);
\fill[npgreen]({\xa{0.85}},4.70)circle(2.4pt);\draw[white,line width=0.5pt]({\xa{0.85}},4.70)circle(2.4pt);
\node[font=\sffamily\fontsize{6.2}{7}\selectfont,anchor=west] at (6.72,4.70){0.850};
\node[font=\sffamily\fontsize{6.5}{8}\selectfont,anchor=east] at (2.42,4.34){Zero-shot LLM judge};
\draw[npslate,line width=1.1pt]({\xa{0.4}},4.34)--({\xa{0.493}},4.34);
\draw[npslate,line width=1.0pt]({\xa{0.4}},4.27)--({\xa{0.4}},4.41);
\draw[npslate,line width=1.0pt]({\xa{0.493}},4.27)--({\xa{0.493}},4.41);
\fill[npslate]({\xa{0.447}},4.34)circle(2.4pt);\draw[white,line width=0.5pt]({\xa{0.447}},4.34)circle(2.4pt);
\node[font=\sffamily\fontsize{6.2}{7}\selectfont,anchor=west] at (6.72,4.34){0.447};
\node[font=\sffamily\fontsize{6.5}{8}\selectfont,anchor=east] at (2.42,3.98){Model self-confidence};
\draw[npslate,line width=1.1pt]({\xa{0.264}},3.98)--({\xa{0.365}},3.98);
\draw[npslate,line width=1.0pt]({\xa{0.264}},3.91)--({\xa{0.264}},4.05);
\draw[npslate,line width=1.0pt]({\xa{0.365}},3.91)--({\xa{0.365}},4.05);
\fill[npslate]({\xa{0.315}},3.98)circle(2.4pt);\draw[white,line width=0.5pt]({\xa{0.315}},3.98)circle(2.4pt);
\node[font=\sffamily\fontsize{6.2}{7}\selectfont,anchor=west] at (6.72,3.98){0.315};
\node[font=\sffamily\fontsize{6.3}{8}\selectfont\itshape,black!65,anchor=west] at (0.02,3.62){Held out (unseen constraint type)};
\node[font=\sffamily\fontsize{6.5}{8}\selectfont,anchor=east] at (2.42,3.20){Metallicity$^{*}$};
\draw[nporange,line width=1.1pt]({\xa{0.51}},3.20)--({\xa{0.652}},3.20);
\draw[nporange,line width=1.0pt]({\xa{0.51}},3.13)--({\xa{0.51}},3.27);
\draw[nporange,line width=1.0pt]({\xa{0.652}},3.13)--({\xa{0.652}},3.27);
\fill[nporange]({\xa{0.582}},3.20)circle(2.4pt);\draw[white,line width=0.5pt]({\xa{0.582}},3.20)circle(2.4pt);
\node[font=\sffamily\fontsize{6.2}{7}\selectfont,anchor=west] at (6.72,3.20){0.582};
\node[font=\sffamily\fontsize{6.5}{8}\selectfont,anchor=east] at (2.42,2.84){Spectra (Raman)};
\draw[npslate,line width=1.1pt]({\xa{0.488}},2.84)--({\xa{0.626}},2.84);
\draw[npslate,line width=1.0pt]({\xa{0.488}},2.77)--({\xa{0.488}},2.91);
\draw[npslate,line width=1.0pt]({\xa{0.626}},2.77)--({\xa{0.626}},2.91);
\fill[npslate]({\xa{0.558}},2.84)circle(2.4pt);\draw[white,line width=0.5pt]({\xa{0.558}},2.84)circle(2.4pt);
\node[font=\sffamily\fontsize{6.2}{7}\selectfont,anchor=west] at (6.72,2.84){0.558};
\node[font=\sffamily\fontsize{6.5}{8}\selectfont,anchor=east] at (2.42,2.48){Microscopy (SEM)};
\draw[npslate,line width=1.1pt]({\xa{0.468}},2.48)--({\xa{0.589}},2.48);
\draw[npslate,line width=1.0pt]({\xa{0.468}},2.41)--({\xa{0.468}},2.55);
\draw[npslate,line width=1.0pt]({\xa{0.589}},2.41)--({\xa{0.589}},2.55);
\fill[npslate]({\xa{0.527}},2.48)circle(2.4pt);\draw[white,line width=0.5pt]({\xa{0.527}},2.48)circle(2.4pt);
\node[font=\sffamily\fontsize{6.2}{7}\selectfont,anchor=west] at (6.72,2.48){0.527};
\node[font=\sffamily\fontsize{6.5}{8}\selectfont,anchor=east] at (2.42,2.12){Magnetism$^{*}$};
\draw[npslate,line width=1.1pt]({\xa{0.43}},2.12)--({\xa{0.576}},2.12);
\draw[npslate,line width=1.0pt]({\xa{0.43}},2.05)--({\xa{0.43}},2.19);
\draw[npslate,line width=1.0pt]({\xa{0.576}},2.05)--({\xa{0.576}},2.19);
\fill[npslate]({\xa{0.503}},2.12)circle(2.4pt);\draw[white,line width=0.5pt]({\xa{0.503}},2.12)circle(2.4pt);
\node[font=\sffamily\fontsize{6.2}{7}\selectfont,anchor=west] at (6.72,2.12){0.503};
\node[font=\sffamily\fontsize{6.5}{8}\selectfont,anchor=east] at (2.42,1.76){Diffraction (XRD)};
\draw[npslate,line width=1.1pt]({\xa{0.419}},1.76)--({\xa{0.578}},1.76);
\draw[npslate,line width=1.0pt]({\xa{0.419}},1.69)--({\xa{0.419}},1.83);
\draw[npslate,line width=1.0pt]({\xa{0.578}},1.69)--({\xa{0.578}},1.83);
\fill[npslate]({\xa{0.497}},1.76)circle(2.4pt);\draw[white,line width=0.5pt]({\xa{0.497}},1.76)circle(2.4pt);
\node[font=\sffamily\fontsize{6.2}{7}\selectfont,anchor=west] at (6.72,1.76){0.497};
\node[font=\sffamily\fontsize{6.5}{8}\selectfont,anchor=east] at (2.42,1.40){Stability$^{*}$};
\draw[npslate,line width=1.1pt]({\xa{0.37}},1.40)--({\xa{0.502}},1.40);
\draw[npslate,line width=1.0pt]({\xa{0.37}},1.33)--({\xa{0.37}},1.47);
\draw[npslate,line width=1.0pt]({\xa{0.502}},1.33)--({\xa{0.502}},1.47);
\fill[npslate]({\xa{0.436}},1.40)circle(2.4pt);\draw[white,line width=0.5pt]({\xa{0.436}},1.40)circle(2.4pt);
\node[font=\sffamily\fontsize{6.2}{7}\selectfont,anchor=west] at (6.72,1.40){0.436};
\draw[black!45,line width=0.5pt](2.55,0.42)--(6.60,0.42);
\draw[black!45,line width=0.4pt]({\xa{0.3}},0.42)--({\xa{0.3}},0.35);
\node[font=\sffamily\fontsize{6.2}{7}\selectfont,anchor=north] at ({\xa{0.3}},0.33){0.3};
\draw[black!45,line width=0.4pt]({\xa{0.4}},0.42)--({\xa{0.4}},0.35);
\node[font=\sffamily\fontsize{6.2}{7}\selectfont,anchor=north] at ({\xa{0.4}},0.33){0.4};
\draw[black!45,line width=0.4pt]({\xa{0.5}},0.42)--({\xa{0.5}},0.35);
\node[font=\sffamily\fontsize{6.2}{7}\selectfont,anchor=north] at ({\xa{0.5}},0.33){0.5};
\draw[black!45,line width=0.4pt]({\xa{0.6}},0.42)--({\xa{0.6}},0.35);
\node[font=\sffamily\fontsize{6.2}{7}\selectfont,anchor=north] at ({\xa{0.6}},0.33){0.6};
\draw[black!45,line width=0.4pt]({\xa{0.7}},0.42)--({\xa{0.7}},0.35);
\node[font=\sffamily\fontsize{6.2}{7}\selectfont,anchor=north] at ({\xa{0.7}},0.33){0.7};
\draw[black!45,line width=0.4pt]({\xa{0.8}},0.42)--({\xa{0.8}},0.35);
\node[font=\sffamily\fontsize{6.2}{7}\selectfont,anchor=north] at ({\xa{0.8}},0.33){0.8};
\draw[black!45,line width=0.4pt]({\xa{0.9}},0.42)--({\xa{0.9}},0.35);
\node[font=\sffamily\fontsize{6.2}{7}\selectfont,anchor=north] at ({\xa{0.9}},0.33){0.9};
\node[font=\sffamily\fontsize{6.5}{8}\selectfont,anchor=north] at (4.55,0.06){Detector AUC with bootstrap 95\% CI};
\node[font=\sffamily\fontsize{5.8}{7}\selectfont,anchor=west,text=npgreen!55!black] at (0.30,-0.42){\rule{6pt}{1.5pt}\,CI above chance};
\node[font=\sffamily\fontsize{5.8}{7}\selectfont,anchor=west,text=nporange!80!black] at (2.55,-0.42){\rule{6pt}{1.5pt}\,nominal CI only};
\node[font=\sffamily\fontsize{5.8}{7}\selectfont,anchor=west,text=npslate] at (4.65,-0.42){\rule{6pt}{1.5pt}\,not above chance};
\end{scope}
\end{tikzpicture}}
\caption{\justifying\textbf{Physical consistency, oracle-driven correction, and violation detection.} \textbf{a)} Physical-Consistency Rate (PCR) for nine models across the six label-free constraint types and overall, ordered by overall PCR; shading is linear in PCR on the scale shown. Microscopy is the easiest constraint and Raman spectra the hardest and most model-dependent, from 0.068 (GPT-4o) to 0.833 (GPT-5.6-Terra). Within the DFT-grounded block the three types diverge for several models, which the pooled DFT column of Table~\ref{tab:main} conceals. Cell $n$ is 44--90 records; per-cell bootstrap intervals are in Supplementary Table~S1. $^{\dagger}$Later-generation refresh models; $^{\ddagger}$flagship-tier addition testing GPT-5.6-Sol against the balanced-tier GPT-5.6-Terra. \textbf{b)} Constraint-Grounded Self-Verification before and after on identical items (Table~\ref{tab:cgsv}). PCR rises for all four models with no regression in any cell, whereas the two matched controls, unaided self-refinement and equal-compute re-prompting (grey, dashed), do not change PCR and do produce regressions, attributing the gain to the oracle signal rather than to additional inference-time compute. \textbf{c)} Detector AUC with bootstrap 95\% confidence intervals (Table~\ref{tab:verifier}) for the released verifier, its two comparator baselines and the six held-out constraint types, on source-disjoint splits. In distribution the verifier separates violations; the LLM judge does not, and self-confidence falls \emph{below} chance, so neither baseline substitutes for it. Held out, five of six types are not separable from chance, and metallicity's interval excludes chance nominally but not after correcting for the six comparisons.}
\label{fig:results}
\end{figure}

\FloatBarrier

\section{Discussion}
Across nine models spanning two capability generations and six label-free constraint
types, current models are far from
uniformly consistent (overall PCR 0.555--0.774), ruling out the scenario in which the
benchmark would be uninformative because every model already satisfies the physics we
check; spectra and diffraction, in particular, remain hard and highly model-dependent
under the corrected oracles (Methods). Capability scaling within a provider family is not
automatic: the later-released GPT-5.6-Terra improves clearly over GPT-4o (0.774 vs.\
0.687, non-overlapping bootstrap CIs), but Claude Opus 4.6 does not improve over Claude
Sonnet 4.5, and an explicit reasoning mode (Qwen3-VL-8B-Thinking) does not improve over its
non-reasoning sibling, in this sample. Nor does flagship tier improve over balanced tier
within the same generation: GPT-5.6-Sol (0.770, 95\% CI $[0.731,0.806]$) does not separate
from GPT-5.6-Terra (0.774, $[0.736,0.810]$) overall, despite leading every model on
diffraction specifically (0.820); price and marketed capability tier are not, on this
evidence, a reliable proxy for physical-consistency performance any more than release
date is. Because the refresh models are one to three provider
releases behind the newest available on the collection date, and GPT-5.6-Terra is a
balanced rather than a flagship tier, these comparisons bound generational improvement in
this sample rather than characterizing the current frontier. The accuracy-versus-consistency gap (a mean 22.2\% disagreement across 12 cells; the
superseded spectrum oracle gave 41.8\%, a figure we record here for transparency about the
correction and do not report elsewhere) shows
directly that PCR is not a restatement of answer accuracy: models are frequently
consistent while measurably wrong on diffraction, and, for stability specifically, frequently correct
by one strict definition while flagged inconsistent by a looser but defensible
tolerance, a real ambiguity in what "stable" should mean for a reasoning check,
not an error in either definition; separately, one model's near-ceiling raw stability
accuracy is a base-rate artifact of the sampled materials, not discriminative skill, once
balanced accuracy is computed. CGSV is a one-directional positive
control: it never regressed a previously consistent item across 545 records, and
raised PCR significantly for all four models tested, an effect two mechanism-isolation
controls (unaided self-refinement, generic equal-compute re-prompting) do not reproduce,
supporting attribution of the gain to the oracle signal rather than to extra
inference-time compute alone. The released verifier is a useful in-distribution
detector (AUC $=0.850$, on a split we rebuilt to be source-disjoint after finding the
original random split leaked item identity across train and test), but its
cross-constraint-type generalization is the paper's clearest
negative result: five of the six held-out constraint types are indistinguishable from
chance, and the sixth (metallicity) separates from chance only before correcting for the
six comparisons.
This corrects an earlier, substantively different finding: under the leaky split, magnetism
appeared to generalize as well held-out as in distribution (AUC $=0.867$); under the
corrected, source-disjoint split, magnetism's held-out AUC is $0.503$, and we now attribute
the earlier result to the split leak rather than to a real transferable signal. We report
this correction, and the resulting conclusion that no held-out constraint type shows transfer
surviving correction for the six comparisons, rather than continue to report the
retracted exception, consistent with our stated position that a verifier trained on one
constraint type's surface patterns should not be assumed to transfer to another
without direct evidence. Comparator baselines sharpen this picture. In distribution
neither is competitive: a zero-shot LLM-as-judge reaches AUC $0.447$ and the model's own
token-level self-confidence $0.315$, the latter significantly below chance, against the
fine-tuned verifier's $0.850$. Held out, all three are weak, but not uniformly: the
LLM-judge reaches AUC $0.735$ (95\% CI $[0.682,0.790]$) on microscopy, above anything the
fine-tuned verifier achieves on any held-out constraint type, which suggests the fine-tuned
verifier's advantage is specific to the distribution it was trained on rather than a general
superiority over prompting. The fine-tuned verifier's in-distribution advantage is real, but it is
not a general substitute for the oracle out of distribution. The central limitation is intrinsic to the approach, not
particular to these results: the oracles bound a class of physically checkable
errors, so PCR is a necessary-not-sufficient reliability signal, and the released
verifier predicts the oracle signal rather than replacing first-principles ground
truth. As a clearly scoped extension, the verifier could pre-filter candidate
structures before DFT in a small screening study; a first best-of-3 resampling check finds
real but modest headroom over a random pick (PCR 0.363 random-pick vs.\ 0.419 oracle-oracle
ceiling), though we did not build the verifier-as-selector comparison this ceiling would
need to be practically useful, and report it only as a bounded, partial result. A further implication,
which we make possible but do not claim to demonstrate here, is that a label-free,
physics-grounded consistency signal can serve as a verifiable reward for training
materials-reasoning models, extending reinforcement learning from verifiable rewards
to a domain where verification has been under-specified.\cite{scirlvr2026} We also ran a
domain-specific structure-aware model (MatterChat) on the DFT arms as a materials-domain
evaluee (Baselines subsection): it reaches PCR 0.783/0.795/0.850 on stability/metallicity/
magnetism on a shuffled sample of 120 real materials. These rates sit at or above the
general-purpose models' rates on the same three constraint types, but the sample is drawn
separately from the main evaluation, so the comparison is indicative rather than paired,
and it rests on one domain model. Finally,
every DFT-grounded number in this paper (the DFT column of Table~\ref{tab:main}, the
stability arm of the accuracy-versus-consistency analysis, and the metallicity and
magnetism arms of the verifier and detector analyses) rests on Materials Project
reference values and oracle tolerances.

\section{Methods}
\subsection{Data sources}
MatPCR draws only on public data for reproducibility and license clarity: experimental
powder diffraction patterns with known structures;\cite{opxrd2025} public scanning electron microscopy (SEM) images that include scale bars;\cite{aversa2018nffa} open Raman spectra;\cite{lafuente2015rruff} and public first-principles data (computed energies,
convex hulls, band gaps) from established databases.\cite{matproj2013} No number reported
in this paper rests on a first-principles calculation run by the authors: every
DFT-grounded reference value used is Materials Project's own standardized computational
result, retrieved via the \texttt{mp-api} client (Methods, DFT-grounded oracles); an
authors'-own-DFT fallback for missing or questionable Materials Project entries was
implemented but never invoked over the reported sample. We also surveyed the simulated
diffraction dataset SimXRD-4M\cite{simxrd2024} as a candidate diffraction source, but its
schema (units of the tabulated lattice-distance field; the meaning of its integer label)
could not be resolved against its documentation with confidence, so it was not used in
any reported result; all diffraction numbers in this paper are from real experimental
patterns only.

\subsection{Tasks, modalities, and exposed constraints}
Each item pairs an input with a question whose faithful answer requires at least one
physically checkable quantity (Fig.~\ref{fig:pipeline}a). Inputs span image items (VLM), such as micrographs and
diffraction images, and structure or text items (LLM), such as a crystal structure
with a property question. We parse the reasoning chain for stated quantities (peak
positions, scale, feature sizes, stability claims, property values) and test each
against the matching oracle. An item is checkable if its chain exposes at least one
such quantity.

\subsection{Programmatic geometric oracles}
For diffraction we apply Bragg's law, $\lambda = 2 d \sin\theta$, converting any
claimed $2\theta$ to a $d$-spacing, testing claimed peaks against the reference
pattern of the stated structure within tolerance, and rejecting claims outside the
realizable $2\theta$ window. For microscopy we recover nanometres per pixel from the
detected scale bar and reject any feature larger than the field of view. For spectra
we accept a claimed peak only if a genuine interior local maximum (higher than both
adjacent flanks down to their nearest valleys, rejecting a monotonic rise or a shoulder on
a nearby peak) exists near it with prominence exceeding a fraction of the signal's dynamic
range; this is a revision, made mid-project after a first precision audit showed a
windowed-maximum check without the interior-local-maximum requirement was too permissive
on noisy, baseline-drifting Raman spectra, so it does not appear in numbers computed
before the revision (flagged where relevant throughout).

\subsection{DFT-grounded property oracles}
First-principles access extends the oracle from geometry to computed physics, still
without human labels. A stability claim is checked against the energy above the
convex hull computed for the stated composition; a metallicity claim against the
computed band gap; and a magnetism claim against the computed total magnetization,
normalized per atom. Reference values are Materials Project's own standardized
computational references (database version 2026.04.13, retrieved 2026-07-16 via the
\texttt{mp-api} client; GGA/GGA+U relaxation under Materials Project's published standard
workflow), described throughout as standardized computational references, not
experimental ground truth. Because DFT-computed
references depend on the exchange-correlation functional (semilocal functionals
underestimate band gaps), each oracle is designed around that known error rather than
treating the reference as exact. \emph{Stability (operational near-hull criterion)}: for this benchmark,
a model's stable/unstable claim is judged consistent according to whether
the computed energy above the convex hull satisfies
$E_{\mathrm{hull}}\leq\epsilon$, with
$\epsilon=0.05\,\mathrm{eV\,atom^{-1}}$. We additionally report sensitivity
at
$\epsilon\in\{0,0.025,0.05,0.10\}\,\mathrm{eV\,atom^{-1}}$;
all four thresholds gave 0\% false-negative and false-positive rates in the
positive-control audit. The adopted threshold is an operational screening
criterion rather than a definition of strict thermodynamic stability:
structures with $E_{\mathrm{hull}}=0$ lie on the computed convex hull,
whereas structures with
$0<E_{\mathrm{hull}}\leq0.05\,\mathrm{eV\,atom^{-1}}$
are described as near-hull or potentially metastable. On a random sample
of 3,000 Materials Project entries, this operational threshold and the
Materials Project \texttt{is\_stable} label disagreed for 571 structures
(19.0\%); in every such case, Materials Project labelled the structure unstable whereas the operational threshold admitted it as near-hull (the reverse did not occur in this sample).
Figure~\ref{fig:dft-oracle-cases}a shows three model claims evaluated against this operational near-hull criterion. 
\emph{Computed band-gap classification}: a Materials Project band gap
$\leq0.05\,\mathrm{eV}$ is assigned to the computed zero/near-zero-gap
class, whereas a gap $\geq0.20\,\mathrm{eV}$ is assigned to the computed
nonzero-gap class. The intervening range
($0.05$--$0.20\,\mathrm{eV}$) is treated as an explicit abstention band
and excluded from PCR rather than forced into either class. These thresholds
were fixed before the reported evaluation runs. Figure~\ref{fig:dft-oracle-cases}b
shows three examples of model classifications that are inconsistent with the
corresponding Materials Project-computed band-gap references.
\emph{Magnetism (net-magnetization check)}: because a global reversal of all
collinear spins changes the sign of the total magnetization without changing the
physical state, the oracle uses the absolute total magnetization normalized per
atom. The Materials Project reference is classified as finite or near-zero net
magnetization using a threshold of
$0.05\,\mu_{\rm B}\,\mathrm{atom}^{-1}$, fixed before the reported runs.
Only within the finite-net-magnetization class is a claimed magnitude checked
against the reference within one order of magnitude
(Fig.~\ref{fig:dft-oracle-cases}c). This oracle concerns net magnetization only;
a near-zero net value does not exclude antiferromagnetic or compensated
ferrimagnetic ordering. An earlier version used an exact-zero comparison and was
then patched with a $10^{-6}\,\mu_{\rm B}$ absolute tolerance for numerical noise;
both formulations are superseded by the per-atom classification threshold above
and no longer appear in the released code.
Positive-control testing showed all
three thresholds and the stability sensitivity grid give 0\% false-negative and
false-positive rates on real Materials Project structures; the metallicity abstention band
correctly excluded 125 of 3{,}000 sampled materials in that test. These checks make plausible-but-wrong physics detectable.
No step requires a human annotation of the correct answer, which is what lets MatPCR
scale and distinguishes it from answer-accuracy benchmarks and from proposal- or
execution-level verification.

\begin{figure}[!htbp]
\centering

\begin{tikzpicture}[font=\sffamily,scale=0.92,transform shape]
\def\cardw{3.95}
\def\cardh{3.35}
\def\cgap{0.45}

\node[anchor=west,font=\sffamily\bfseries\fontsize{8}{9}\selectfont]
at (0,\cardh+0.18) {(a) Stability oracle};

\foreach \i/\x/\formula/\claim/\trueval/\why in {
  0/0/{SrMoO$_3$}/{UNSTABLE}/{STABLE\\[1pt]{\fontsize{4.6}{5.6}\selectfont$E_{\rm hull}{=}0.0000$}}/{MP hull energy is exactly 0:\\structure on the computed hull\\judged unstable.},
  1/{\cardw+\cgap}/{ThPbAu$_2$}/{UNSTABLE}/{STABLE\\[1pt]{\fontsize{4.6}{5.6}\selectfont$E_{\rm hull}{=}0.0000$}}/{Same error on an unrelated\\composition: hull energy 0,\\still defaulted to UNSTABLE.},
  2/{2*\cardw+2*\cgap}/{EuZrFeBiO$_6$}/{STABLE}/{UNSTABLE\\[1pt]{\fontsize{4.6}{5.6}\selectfont$E_{\rm hull}{=}1.2843$}}/{Opposite error, far larger:\\hull energy is 25$\times$ the\\tolerance, unambiguous.}
}{
  \node[card,minimum width=\cardw cm,minimum height=\cardh cm,anchor=south west]
  at (\x,0){};

  \node[font=\sffamily\bfseries\fontsize{9}{10.5}\selectfont,
        text=npslate,anchor=north]
  at (\x+\cardw/2,\cardh-0.18){\formula};

  \node[font=\sffamily\fontsize{6.0}{7.2}\selectfont,
        black!55,anchor=north]
  at (\x+\cardw/2,\cardh-0.60){real Materials Project entry};

  \node[bad,text width=1.62cm,minimum height=0.80cm,align=center,
        font=\sffamily\bfseries\fontsize{5.6}{6.6}\selectfont,inner sep=1.5pt]
  (scl\i) at (\x+0.98,\cardh-1.62){\claim};

  \node[font=\sffamily\fontsize{5.4}{6.4}\selectfont,
        black!55,anchor=south]
  at (\x+0.98,\cardh-1.22){model claimed};

  \node[good,text width=1.62cm,minimum height=0.80cm,align=center,
        font=\sffamily\bfseries\fontsize{5.6}{6.6}\selectfont,inner sep=1.5pt]
  (srf\i) at (\x+\cardw-0.98,\cardh-1.62){\trueval};

  \node[font=\sffamily\fontsize{5.4}{6.4}\selectfont,
        black!55,anchor=south]
  at (\x+\cardw-0.98,\cardh-1.22){MP-computed reference};

  \draw[flow,black!55] (scl\i.east)--(srf\i.west);

  \node[npred,font=\sffamily\bfseries\fontsize{9}{10}\selectfont]
  at (\x+\cardw/2,\cardh-1.62){$\times$};

  \draw[black!15,line width=0.5pt]
  (\x+0.25,\cardh-2.18)--(\x+\cardw-0.25,\cardh-2.18);

  \node[font=\sffamily\fontsize{6.2}{7.8}\selectfont,
        black!70,anchor=north,align=center,text width=3.65cm]
  at (\x+\cardw/2,\cardh-2.34){\why};
}
\end{tikzpicture}

\vspace{2mm}

\begin{tikzpicture}[font=\sffamily,scale=0.92,transform shape]
\def\cardw{3.95}
\def\cardh{3.35}
\def\cgap{0.45}

\node[anchor=west,font=\sffamily\bfseries\fontsize{8}{9}\selectfont]
at (0,\cardh+0.18) {(b) Computed band-gap oracle};

\foreach \i/\x/\formula/\claim/\trueval/\why in {
  0/0/{KCaTbNbO$_6$}/{NONMETALLIC}/{METALLIC\\[1pt]{\fontsize{4.6}{5.6}\selectfont$E_g{=}0.0000$\,eV}}/{Materials Project reports\\a zero computed gap;\\reasoned from structure alone.},
  1/{\cardw+\cgap}/{Ca(H$_8$O$_5)_2$}/{METALLIC}/{NONMETALLIC\\[1pt]{\fontsize{4.6}{5.6}\selectfont$E_g{=}4.4694$\,eV}}/{Opposite error: the MP-computed gap\\is 4.47\,eV,\\over 20$\times$ the abstention edge.},
  2/{2*\cardw+2*\cgap}/{KMgMn$_{16}$O$_{32}$}/{NONMETALLIC}/{METALLIC\\[1pt]{\fontsize{4.6}{5.6}\selectfont$E_g{=}0.0000$\,eV}}/{Computed gap is exactly\\zero again; reasoning pointed\\the wrong direction.}
}{
  \node[card,minimum width=\cardw cm,minimum height=\cardh cm,anchor=south west]
  at (\x,0){};

  \node[font=\sffamily\bfseries\fontsize{9}{10.5}\selectfont,
        text=npslate,anchor=north]
  at (\x+\cardw/2,\cardh-0.18){\formula};

  \node[font=\sffamily\fontsize{6.0}{7.2}\selectfont,
        black!55,anchor=north]
  at (\x+\cardw/2,\cardh-0.60){real Materials Project entry};

  \node[bad,text width=1.62cm,minimum height=0.80cm,align=center,
        font=\sffamily\bfseries\fontsize{5.4}{6.4}\selectfont,inner sep=1.5pt]
  (mcl\i) at (\x+0.98,\cardh-1.62){\claim};

  \node[font=\sffamily\fontsize{5.4}{6.4}\selectfont,
        black!55,anchor=south]
  at (\x+0.98,\cardh-1.22){model claimed};

  \node[good,text width=1.62cm,minimum height=0.80cm,align=center,
        font=\sffamily\bfseries\fontsize{5.4}{6.4}\selectfont,inner sep=1.5pt]
  (mrf\i) at (\x+\cardw-0.98,\cardh-1.62){\trueval};

  \node[font=\sffamily\fontsize{5.4}{6.4}\selectfont,
        black!55,anchor=south]
  at (\x+\cardw-0.98,\cardh-1.22){MP-computed reference};

  \draw[flow,black!55] (mcl\i.east)--(mrf\i.west);

  \node[npred,font=\sffamily\bfseries\fontsize{9}{10}\selectfont]
  at (\x+\cardw/2,\cardh-1.62){$\times$};

  \draw[black!15,line width=0.5pt]
  (\x+0.25,\cardh-2.18)--(\x+\cardw-0.25,\cardh-2.18);

  \node[font=\sffamily\fontsize{6.2}{7.8}\selectfont,
        black!70,anchor=north,align=center,text width=3.65cm]
  at (\x+\cardw/2,\cardh-2.34){\why};
}
\end{tikzpicture}

\vspace{2mm}

\begin{tikzpicture}[font=\sffamily,scale=0.92,transform shape]
\def\cardw{3.95}
\def\cardh{3.35}
\def\cgap{0.45}

\node[anchor=west,font=\sffamily\bfseries\fontsize{8}{9}\selectfont]
at (0,\cardh+0.18) {(c) Net-magnetization oracle};

\foreach \i/\x/\formula/\claim/\trueval/\why in {
  0/0/{C}/{MAGNETIC\\[1pt]{\fontsize{4.6}{5.6}\selectfont 2.2\,\textmu B/atom}}/{NONMAGNETIC\\[1pt]{\fontsize{4.6}{5.6}\selectfont 0.0000\,\textmu B/atom}}/{Real moment is exactly zero;\\reasoning admits it never\\identified the formula.},
  1/{\cardw+\cgap}/{Eu$_2$Sb$_3$}/{NONMAGNETIC}/{MAGNETIC\\[1pt]{\fontsize{4.6}{5.6}\selectfont 2.76\,\textmu B/atom}}/{Real moment is large and\\unambiguous; reasoned Eu/Sb\\moments cancel, but they do not.},
  2/{2*\cardw+2*\cgap}/{Na$_2$FeO$_3$}/{NONMAGNETIC}/{MAGNETIC\\[1pt]{\fontsize{4.6}{5.6}\selectfont 0.667\,\textmu B/atom}}/{Real moment is well above\\threshold; own reasoning called\\this ``close to zero.''}
}{
  \node[card,minimum width=\cardw cm,minimum height=\cardh cm,anchor=south west]
  at (\x,0){};

  \node[font=\sffamily\bfseries\fontsize{9}{10.5}\selectfont,
        text=npslate,anchor=north]
  at (\x+\cardw/2,\cardh-0.18){\formula};

  \node[font=\sffamily\fontsize{6.0}{7.2}\selectfont,
        black!55,anchor=north]
  at (\x+\cardw/2,\cardh-0.60){real Materials Project entry};

  \node[bad,text width=1.62cm,minimum height=0.80cm,align=center,
        font=\sffamily\bfseries\fontsize{5.4}{6.4}\selectfont,inner sep=1.5pt]
  (gcl\i) at (\x+0.98,\cardh-1.62){\claim};

  \node[font=\sffamily\fontsize{5.4}{6.4}\selectfont,
        black!55,anchor=south]
  at (\x+0.98,\cardh-1.22){model claimed};

  \node[good,text width=1.62cm,minimum height=0.80cm,align=center,
        font=\sffamily\bfseries\fontsize{5.4}{6.4}\selectfont,inner sep=1.5pt]
  (grf\i) at (\x+\cardw-0.98,\cardh-1.62){\trueval};

  \node[font=\sffamily\fontsize{5.4}{6.4}\selectfont,
        black!55,anchor=south]
  at (\x+\cardw-0.98,\cardh-1.22){MP-computed reference};

  \draw[flow,black!55] (gcl\i.east)--(grf\i.west);

  \node[npred,font=\sffamily\bfseries\fontsize{9}{10}\selectfont]
  at (\x+\cardw/2,\cardh-1.62){$\times$};

  \draw[black!15,line width=0.5pt]
  (\x+0.25,\cardh-2.18)--(\x+\cardw-0.25,\cardh-2.18);

  \node[font=\sffamily\fontsize{6.2}{7.8}\selectfont,
        black!70,anchor=north,align=center,text width=3.65cm]
  at (\x+\cardw/2,\cardh-2.34){\why};
}
\end{tikzpicture}

\caption{\justifying\textbf{Representative Materials Project-grounded oracle
violations.}
\textbf{a}, Stability-oracle cases comparing model STABLE/UNSTABLE claims with
Materials Project energies above the convex hull, $E_{\rm hull}$. The first two
structures lie on the Materials Project computed convex hull, whereas
EuZrFeBiO$_6$ lies far above the
$0.05\,\mathrm{eV\,atom^{-1}}$ operational near-hull threshold.
\textbf{b}, Computed band-gap cases comparing model METALLIC/NONMETALLIC claims
with the operational zero/near-zero-gap and nonzero-gap classes derived from
Materials Project band gaps, $E_g$. These are computational reference
classifications, not experimental transport assignments.
\textbf{c}, Net-magnetization cases comparing model MAGNETIC/NONMAGNETIC claims
with the operational near-zero and finite classes derived from Materials Project
total magnetization per atom. This panel evaluates net magnetization only and
does not determine the complete magnetic ordering. All examples are real,
human-confirmed oracle activations from the DFT-grounded Half-B evaluation;
values are reported exactly as recorded.}
\label{fig:dft-oracle-cases}
\end{figure}

\subsection{The Physical-Consistency Rate}
Let items be indexed $i=1,\dots,N$, let $k_i$ be the number of physically checkable
constraints the chain exposes on item $i$, and $v_i \le k_i$ the number the oracles
find violated. The item is consistent iff $v_i=0$. Over the checkable subset
$\mathcal{C}=\{i:k_i>0\}$,
\begin{equation}
\mathrm{PCR} \;=\; \frac{1}{|\mathcal{C}|}\sum_{i\in\mathcal{C}} \mathbf{1}[v_i = 0].
\end{equation}
We report per-constraint-type violation rates
$\rho_t=\big(\sum_i v_{i,t}\big)/\big(\sum_i k_{i,t}\big)$ and per-modality rates with
percentile bootstrap confidence intervals. The definition admits several checkable claims per
item, but every item in this benchmark exposes exactly one claim of its own constraint type,
so $k_i=1$ throughout and $\rho_t$ reduces to $1-\mathrm{PCR}$ within a constraint type; we
keep the general form because the metric is intended for chains that expose more. PCR is distinct from answer accuracy: a
model can be accurate yet inconsistent, or consistent yet wrong, and the gap is
itself reported.

\subsection{Constraint-Grounded Self-Verification}
CGSV uses the same oracles as a corrective signal at inference time with no labels.
Because a model's own self-verification is largely confirmatory rather than
corrective,\cite{selfverify2026} CGSV supplies the correction signal externally, from the
oracle, rather than asking the model to re-examine its answer unaided. The model extracts
the governing constraints, answers, is shown any programmatic or DFT-grounded violation,
and revises, up to $K$ rounds or until no violation remains (Algorithm~\ref{alg:cgsv}).
CGSV is a positive control: if the oracles capture real errors, exposing them should raise
PCR, measured by a paired test, not assumed. Writing $c$ for the items the oracle flags
before but not after revision and $b$ for those flagged after but not before, the change in
the rate is exactly $\Delta\mathrm{PCR}=(c-b)/n$, so the discordant pair $(b,c)$ carries all
of the effect and the exact McNemar test on it is the natural paired test
(Supplementary Note~1.6). The loop halts after at most $K$ rounds by construction. We do not
assume revisions can only remove violations; $b$ is measured.

\begin{algorithm}[t]
\caption{Constraint-Grounded Self-Verification}
\label{alg:cgsv}
\begin{algorithmic}[1]
\State \textbf{Input:} input $x$, question $q$, oracle set $\mathcal{O}$, max rounds $K$
\State $G \gets \textsc{ExtractConstraints}(x)$
\State $a \gets \textsc{Answer}(x,q,G)$
\For{$r = 1 \dots K$}
  \State $V \gets \{o \in \mathcal{O} : o(\textsc{Claims}(a), G)\ \text{fails}\}$
  \If{$V = \varnothing$} \State \textbf{break} \EndIf
  \State $a \gets \textsc{Revise}(x, q, G, a, V)$
\EndFor
\State \textbf{return} $a$, residual violations $V$
\end{algorithmic}
\end{algorithm}

\subsection{Released physical-consistency verifier}
Because the oracles require reference data (a structure, a scale bar, a computed
value), we fine-tune an open model to predict, from the input and chain alone,
whether a physical violation is present. Training labels come from the oracles, so the
verifier is trained without human annotation. Unlike process-reward and generative
reward models trained on human preference or model-judged
correctness,\cite{processbench2025,compassverifier2025} its labels are a deterministic
physics signal, which reduces the spurious correlations that affect preference-trained
reward models; where a reference exists, the oracle is hard to game by surface-form changes. We release it as the reusable artifact of this work and report precision,
recall, $F_1$, ROC AUC, and calibration against held-out oracle judgments.

\begin{figure}[tbp]
\centering
\adjustbox{max size={\linewidth}{0.66\textheight}}{%
\begin{tikzpicture}[font=\sffamily]
\node[plab,anchor=west] at (-0.10,6.05){a)};
\node[font=\sffamily\small\bfseries,anchor=west] at (0.55,6.05){From public data to a released verifier, with no human annotation};
\node[data,text width=2.25cm,minimum height=1.45cm,align=center,font=\sffamily\fontsize{6.2}{7.4}\selectfont,inner sep=2pt] (pub) at (1.345,3.15){\textbf{Public data}\\opXRD, RRUFF,\\NFFA-EUROPE,\\Materials Project};
\node[proc,text width=2.45cm,minimum height=1.45cm,align=center,font=\sffamily\fontsize{6.2}{7.4}\selectfont,inner sep=2pt] (build) at (4.185,3.15){\textbf{Items and oracles}\\Bragg, scale bar,\\peak, hull, gap,\\magnetization};
\node[proc,text width=1.80cm,minimum height=1.45cm,align=center,font=\sffamily\fontsize{6.2}{7.4}\selectfont,inner sep=2pt] (run) at (6.80,3.15){\textbf{Sample chains}\\9 models\\$S=3$ seeds};
\node[proc,fill=nppurple!12,draw=nppurple!65,text width=2.00cm,minimum height=1.45cm,align=center,font=\sffamily\fontsize{6.2}{7.4}\selectfont,inner sep=2pt] (chk) at (9.19,3.15){\textbf{Parse claims,}\\\textbf{apply oracles}\\consistent\\or violation};
\node[good,text width=2.15cm,minimum height=1.45cm,align=center,font=\sffamily\fontsize{6.2}{7.4}\selectfont,inner sep=2pt] (pcr) at (11.755,3.15){\textbf{PCR} and $\rho_t$\\bootstrap 95\% CI\\(Prop. 1 bounds\\the oracle bias)};
\node[acc,text width=5.60cm,minimum height=1.15cm,align=center,font=\sffamily\fontsize{6.2}{7.4}\selectfont,inner sep=2pt] (gate) at (9.19,5.15){\textbf{Oracle-validity gate}, run before any model\\planted violations recovered at 10\%, 30\%, 50\%\\(FN rate 0.0000); precision audited by hand};
\draw[flow](gate.south)--node[right,font=\sffamily\fontsize{5.8}{7}\selectfont,black,xshift=-1pt]{validates}(chk.north);
\node[acc,text width=2.30cm,minimum height=1.20cm,align=center,font=\sffamily\fontsize{6.2}{7.4}\selectfont,inner sep=2pt] (cgsv) at (6.80,1.05){\textbf{CGSV}: return the\\violation, revise ($K=2$)\\4 models, diffraction\\and spectra};
\node[proc,fill=npgreen!12,draw=npgreen!65,text width=2.15cm,minimum height=1.20cm,align=center,font=\sffamily\fontsize{6.2}{7.4}\selectfont,inner sep=2pt] (ver) at (11.755,1.05){Fine-tune on\\oracle labels\\$\rightarrow$ \textbf{released}\\\textbf{verifier}};
\node[font=\sffamily\fontsize{5.5}{6}\selectfont,npgreen!45!black,fill=white,draw=npgreen!55,rounded corners=1pt,inner sep=1pt] at ($(ver.north east)+(-0.18,0.02)$){open};
\draw[flow](pub)--(build);
\draw[flow](build)--(run);
\draw[flow](run)--(chk);
\draw[flow](chk)--(pcr);
\draw[flow](chk.south)--(9.19,2.05)--(8.40,2.05)--(8.40,1.05)--(cgsv.east);
\draw[flow](chk.south)--(9.19,2.05)--(11.755,2.05)--(ver.north);
\node[font=\sffamily\fontsize{5.8}{7}\selectfont,black!70,anchor=south] at (10.45,2.09){oracle judgments};
\draw[flow,nporange](cgsv.north)--node[left,font=\sffamily\fontsize{5.8}{7}\selectfont,black,align=right]{label-free\\feedback}(run.south);
\draw[black!25,line width=0.5pt,dash pattern=on 2pt off 1.5pt](0.12,0.20)--(13.00,0.20);
\node[font=\sffamily\fontsize{6.2}{7}\selectfont,black!60,anchor=west] at (0.12,-0.02){No step requires a human annotation of the correct answer; every training and scoring label above is derived from physics.};
\begin{scope}[shift={(0,-6.30)}]
\node[plab,anchor=west] at (-0.10,5.05){b)};
\node[font=\sffamily\fontsize{7.6}{9}\selectfont\bfseries,anchor=west] at (0.52,5.05){What PCR adds to accuracy};
\fill[npgreen!16](1.00,2.27) rectangle (3.60,3.82);
\node[align=center,font=\sffamily\fontsize{6.4}{7.6}\selectfont,text=npgreen!45!black] at (2.30,3.35){\textbf{right for the}\\\textbf{right reasons}};
\node[align=center,font=\sffamily\fontsize{5.7}{6.8}\selectfont,text=black!62] at (2.30,2.70){the target};
\fill[nporange!16](3.60,2.27) rectangle (6.20,3.82);
\node[align=center,font=\sffamily\fontsize{6.4}{7.6}\selectfont,text=nporange!60!black] at (4.90,3.35){\textbf{right despite a}\\\textbf{physics violation}};
\node[align=center,font=\sffamily\fontsize{5.7}{6.8}\selectfont,text=black!62] at (4.90,2.70){dominates stability\\(gap 18.0--27.8\%)};
\fill[npblue!13](1.00,0.72) rectangle (3.60,2.27);
\node[align=center,font=\sffamily\fontsize{6.4}{7.6}\selectfont,text=npblue!55!black] at (2.30,1.80){\textbf{wrong but}\\\textbf{consistent}};
\node[align=center,font=\sffamily\fontsize{5.7}{6.8}\selectfont,text=black!62] at (2.30,1.15){dominates diffraction\\(up to 58.1\% of a cell)};
\fill[npred!13](3.60,0.72) rectangle (6.20,2.27);
\node[align=center,font=\sffamily\fontsize{6.4}{7.6}\selectfont,text=npred!70!black] at (4.90,1.80){\textbf{wrong and}\\\textbf{inconsistent}};
\node[align=center,font=\sffamily\fontsize{5.7}{6.8}\selectfont,text=black!62] at (4.90,1.15){dominates spectra};
\draw[black!35,line width=0.5pt](1.00,0.72) rectangle (6.20,3.82);
\draw[black!35,line width=0.5pt](3.60,0.72)--(3.60,3.82);
\draw[black!35,line width=0.5pt](1.00,2.27)--(6.20,2.27);
\node[font=\sffamily\fontsize{6.4}{8}\selectfont,anchor=south] at (2.30,3.92){\textcolor{npgreen!55!black}{\checkmark}\,consistent};
\node[font=\sffamily\fontsize{6.4}{8}\selectfont,anchor=south] at (4.90,3.92){\textcolor{npred}{$\times$}\,inconsistent};
\node[font=\sffamily\fontsize{6.4}{8}\selectfont,rotate=90,anchor=south] at (0.88,3.05){correct};
\node[font=\sffamily\fontsize{6.4}{8}\selectfont,rotate=90,anchor=south] at (0.88,1.50){incorrect};
\draw[decorate,decoration={brace,amplitude=4pt,mirror,raise=1pt},black!50,line width=0.5pt](1.00,0.62)--(6.20,0.62);
\node[font=\sffamily\fontsize{6.2}{7}\selectfont,anchor=north,text=npslate] at (3.60,0.42){resolved by PCR, without labels};
\draw[decorate,decoration={brace,amplitude=4pt,raise=1pt},black!50,line width=0.5pt](0.38,0.72)--(0.38,3.82);
\node[font=\sffamily\fontsize{6.2}{7}\selectfont,rotate=90,anchor=south,text=black!55] at (0.20,2.27){resolved by accuracy, needs labels};
\end{scope}
\begin{scope}[shift={(6.90,-6.30)}]
\node[plab,anchor=west] at (-0.10,5.05){c)};
\node[font=\sffamily\fontsize{7.6}{9}\selectfont\bfseries,anchor=west] at (0.52,5.05){Where the two axes disagree};
\def\xg#1{2.35 + #1/60*3.45}
\draw[npslate!70,dash pattern=on 2pt off 1.6pt,line width=0.6pt]({\xg{22.2}},0.95)--({\xg{22.2}},4.30);
\node[font=\sffamily\fontsize{5.8}{7}\selectfont,text=npslate,anchor=south,align=center] at ({\xg{22.2}},4.34){mean 22.2\%\\(12 cells)};
\node[font=\sffamily\fontsize{6.4}{8}\selectfont,anchor=east] at (2.22,3.7){Diffraction (XRD)};
\draw[npblue,line width=1.0pt,dash pattern=on 1.6pt off 1.4pt]({\xg{0}},3.7)--({\xg{58.1}},3.7);
\fill[npblue]({\xg{58.1}},3.7)circle(2.6pt);\draw[white,line width=0.5pt]({\xg{58.1}},3.7)circle(2.6pt);
\node[font=\sffamily\fontsize{5.9}{7}\selectfont,anchor=south east,text=black!65] at ({\xg{58.1}+0.16},3.8600000000000003){58.1\% largest cell};
\node[font=\sffamily\fontsize{6.4}{8}\selectfont,anchor=east] at (2.22,2.95){Stability$^{*}$};
\draw[npgreen,line width=2.6pt]({\xg{18.0}},2.95)--({\xg{27.8}},2.95);
\node[font=\sffamily\fontsize{5.9}{7}\selectfont,anchor=west,text=black!65] at ({\xg{27.8}+0.10},2.95){18.0--27.8\%};
\node[font=\sffamily\fontsize{6.4}{8}\selectfont,anchor=east] at (2.22,2.2){Spectra (Raman)};
\draw[nppurple,line width=2.6pt]({\xg{4.2}},2.2)--({\xg{11.2}},2.2);
\node[font=\sffamily\fontsize{5.9}{7}\selectfont,anchor=south,text=black!65] at ({\xg{7.699999999999999}},2.33){4.2--11.2\%};
\draw[black!45,line width=0.5pt]({\xg{0}},1.75)--({\xg{60}},1.75);
\draw[black!45,line width=0.4pt]({\xg{0}},1.75)--({\xg{0}},1.68);
\node[font=\sffamily\fontsize{6.2}{7}\selectfont,anchor=north] at ({\xg{0}},1.66){0\%};
\draw[black!45,line width=0.4pt]({\xg{20}},1.75)--({\xg{20}},1.68);
\node[font=\sffamily\fontsize{6.2}{7}\selectfont,anchor=north] at ({\xg{20}},1.66){20\%};
\draw[black!45,line width=0.4pt]({\xg{40}},1.75)--({\xg{40}},1.68);
\node[font=\sffamily\fontsize{6.2}{7}\selectfont,anchor=north] at ({\xg{40}},1.66){40\%};
\draw[black!45,line width=0.4pt]({\xg{60}},1.75)--({\xg{60}},1.68);
\node[font=\sffamily\fontsize{6.2}{7}\selectfont,anchor=north] at ({\xg{60}},1.66){60\%};
\node[font=\sffamily\fontsize{6.4}{8}\selectfont,anchor=north] at ({\xg{30}},1.36){items on which accuracy and PCR disagree};
\node[font=\sffamily\fontsize{5.8}{7}\selectfont,text=black!58,anchor=west,align=left] at (0.02,0.62){Microscopy is excluded: its prompt elicits only a feature-size claim,\\so no non-circular accuracy criterion is available.};
\end{scope}
\end{tikzpicture}}
\caption{\justifying\textbf{The MatPCR framework and what its metric adds to accuracy.} \textbf{a)} Items and oracles are built from public characterization data and Materials Project references; the oracles are validated before any model is run, by a positive control that plants violations at three known rates and a human audit of oracle-fired cases; reasoning chains are then parsed into claims, checked, and summarized as the Physical-Consistency Rate with bootstrap intervals, whose displacement from the true rate Proposition 1 makes exact. The same oracle judgments drive Constraint-Grounded Self-Verification, which returns each violation to the model for revision, and supply the training labels for the released verifier. No step uses a human annotation of the correct answer. \textbf{b)} A reasoning chain occupies one of four states. Accuracy resolves the rows and needs labels; PCR resolves the columns and does not. The three constraint types with a non-circular accuracy criterion each populate a different off-diagonal cell: models are frequently close enough to satisfy the diffraction oracle while measurably off the exact peak, correct by Materials Project's own stability label yet outside the oracle's looser tolerance, and wrong on both counts for spectra. \textbf{c)} The resulting disagreement, as the percentage of items on which the two axes differ, across the four originally evaluated models. The mean is 22.2\% over the twelve (model, constraint-type) cells, so PCR is not answer accuracy under another name; the spread across constraint types shows that the two axes come apart for different reasons in each.}
\label{fig:pipeline}
\end{figure}

\subsection{What oracle error does to the reported rate}
An oracle that misses genuine violations or fires on clean chains moves the measured PCR away
from the rate we want to report. The result below makes that displacement exact, and is the
formal reason the oracle-validity gate is run before any model. Let
$\mathcal{C}$ be the set of checkable items with $|\mathcal{C}|=n$, where here $n$ denotes
the number of checkable items in the set under analysis (distinct from the per-cell item
count of the evaluation tables). For
$i\in\mathcal{C}$ let $Y_i\in\{0,1\}$ be the latent indicator that the chain contains
at least one genuine physical violation and $\hat Y_i\in\{0,1\}$ the oracle's decision.
The true inconsistency rate is $\tau=\frac1n\sum_i Y_i$, the true rate is
$\mathrm{PCR}^\star=1-\tau$, and the measured rate is
$\mathrm{PCR}=1-\frac1n\sum_i\hat Y_i$. Define the empirical item-level recall and
false-alarm rate of the oracle,
$\hat d=\#\{\hat Y_i{=}1,Y_i{=}1\}/\#\{Y_i{=}1\}$ and
$\hat f=\#\{\hat Y_i{=}1,Y_i{=}0\}/\#\{Y_i{=}0\}$ (a rate multiplies a zero
count to zero when its class is empty).

\begin{proposition}[Bias decomposition]\label{prop:bias}
$\mathrm{PCR}-\mathrm{PCR}^\star=\tau(1-\hat d)-(1-\tau)\hat f.$
\end{proposition}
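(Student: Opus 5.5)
The identity is a counting argument over the $2\times2$ confusion table of $(Y_i,\hat Y_i)$ on $\mathcal{C}$. I will write $n_{11}=\#\{Y_i=1,\hat Y_i=1\}$, $n_{10}=\#\{Y_i=1,\hat Y_i=0\}$, $n_{01}=\#\{Y_i=0,\hat Y_i=1\}$ and $n_{00}$ for the remaining cell. Then $\#\{Y_i=1\}=n_{11}+n_{10}=n\tau$ and $\#\{Y_i=0\}=n_{01}+n_{00}=n(1-\tau)$.

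\textbf{Step 1: express the difference as a count.} Directly from the definitions,
\begin{equation*}
\mathrm{PCR}-\mathrm{PCR}^\star=\tau-\frac1n\sum_i\hat Y_i=\frac{(n_{11}+n_{10})-(n_{11}+n_{01})}{n}=\frac{n_{10}-n_{01}}{n}.
\end{equation*}
So the displacement is exactly the missed violations minus the false alarms, normalized by $n$.

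\textbf{Step 2: rewrite each cell through the oracle rates.} By the definitions of $\hat d$ and $\hat f$, I have $n_{11}=\hat d\,n\tau$. This gives $n_{10}=n\tau-n_{11}=n\tau(1-\hat d)$. Likewise $n_{01}=\hat f\,n(1-\tau)$. Substituting into Step 1 yields $\tau(1-\hat d)-(1-\tau)\hat f$, which is the claimed identity.

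\textbf{Step 3: handle empty classes.} This is the only delicate point. If $\#\{Y_i=1\}=0$, then $\tau=0$ and $n_{10}=0$. Under the stated convention the rate multiplies a zero count to zero, so $n\tau(1-\hat d)=0=n_{10}$ still holds. The case $\#\{Y_i=0\}=0$ is symmetric, with $1-\tau=0$ and $n_{01}=0$.

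No probabilistic assumption is needed: the identity is exact at the level of the realized sample. That is why the recall and false-alarm rates are the empirical ones. The one-sided Corollary then follows from the sign of each term once $\hat d=1$.
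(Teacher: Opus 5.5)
Your argument is correct and is essentially the paper's own counting proof. The paper writes $\sum_i\hat Y_i=\hat d\,\#\{Y_i{=}1\}+\hat f\,\#\{Y_i{=}0\}$, gets $\mathrm{PCR}=1-\tau\hat d-(1-\tau)\hat f$, and subtracts $\mathrm{PCR}^\star=1-\tau$; you reach the same identity through the confusion-table cells as $(n_{10}-n_{01})/n$, and your empty-class check just makes explicit the zero-count convention the paper builds into the definitions of $\hat d$ and $\hat f$.
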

\begin{proof}
Counting flagged items,
$\sum_i\hat Y_i=\hat d\,\#\{Y_i{=}1\}+\hat f\,\#\{Y_i{=}0\}=n[\tau\hat d+(1-\tau)\hat f]$,
so $\mathrm{PCR}=1-\tau\hat d-(1-\tau)\hat f$. Subtracting $\mathrm{PCR}^\star=1-\tau$
gives the result.
\end{proof}

\begin{corollary}[One-sided guarantees and bias bound]\label{cor:onesided}
If the oracle is complete $(\hat d=1)$ then $\mathrm{PCR}\le\mathrm{PCR}^\star$, so PCR
never overstates consistency; if it is sound $(\hat f=0)$ then
$\mathrm{PCR}\ge\mathrm{PCR}^\star$; if both, $\mathrm{PCR}=\mathrm{PCR}^\star$. In
general $|\mathrm{PCR}-\mathrm{PCR}^\star|\le(1-\hat d)+\hat f$.
\end{corollary}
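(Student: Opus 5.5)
The plan is to read every clause of Corollary~\ref{cor:onesided} directly off the exact identity of Proposition~\ref{prop:bias},
\[
\mathrm{PCR}-\mathrm{PCR}^\star=\tau(1-\hat d)-(1-\tau)\hat f,
\]
using only that $\tau,\hat d,\hat f\in[0,1]$. No new counting is needed: the identity holds for every realization of $(Y_i,\hat Y_i)$. The convention that a rate on an empty class contributes zero means the degenerate cases $\tau=0$ and $\tau=1$ are already covered by the identity, so I will not split on them.

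First, the one-sided claims. If the oracle is complete, $\hat d=1$, the first term vanishes and the difference equals $-(1-\tau)\hat f\le 0$, because $1-\tau\ge0$ and $\hat f\ge0$. Hence $\mathrm{PCR}\le\mathrm{PCR}^\star$. If it is sound, $\hat f=0$, the second term vanishes and the difference equals $\tau(1-\hat d)\ge0$, giving $\mathrm{PCR}\ge\mathrm{PCR}^\star$. If both hold, both terms vanish and the two rates coincide.

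Second, the general bound. I will apply the triangle inequality to the two terms:
\[
|\mathrm{PCR}-\mathrm{PCR}^\star|\le\tau(1-\hat d)+(1-\tau)\hat f,
\]
and then use $\tau\le1$ and $1-\tau\le1$ to reach $(1-\hat d)+\hat f$. A sharper intermediate statement comes for free. The right-hand side is a convex combination of $1-\hat d$ and $\hat f$, and the two terms have opposite signs, so the bias actually lies in the interval $[-\hat f,\,1-\hat d]$. Consequently $|\mathrm{PCR}-\mathrm{PCR}^\star|\le\max(1-\hat d,\hat f)$. I would state the weaker sum form as the corollary asks and mention the maximum form in passing.

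There is no real obstacle here. The only point needing care is the sign bookkeeping: check that $1-\hat d\ge0$, which holds because $\hat d$ is a proportion, and that the empty-class convention keeps $\hat d$ and $\hat f$ in $[0,1]$ whenever they multiply a nonzero weight. Once that is in place, each clause is a one-line substitution into Proposition~\ref{prop:bias}.
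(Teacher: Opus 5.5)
Your proposal is correct and matches the paper's proof: the three one-sided cases come from substituting $\hat d=1$ and $\hat f=0$ into Proposition~\ref{prop:bias}, and the general bound comes from the triangle inequality followed by $\tau,\,1-\tau\le1$. Your extra observation is also valid and slightly sharpens the paper's sum bound: the bias lies in $[-\hat f,\,1-\hat d]$, so $|\mathrm{PCR}-\mathrm{PCR}^\star|\le\max(1-\hat d,\hat f)$.
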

\begin{proof}
The three cases substitute $\hat d=1$ and $\hat f=0$ into
Proposition~\ref{prop:bias}. For the bound,
$|\tau(1-\hat d)-(1-\tau)\hat f|\le\tau(1-\hat d)+(1-\tau)\hat f\le(1-\hat d)+\hat f$
because $\tau,\,1-\tau\le1$.
\end{proof}

\noindent Corollary~\ref{cor:onesided} is the formal reason to run the positive control
and precision audit first (Table~\ref{tab:exp1}). The positive control estimates the
oracle's recall $\hat d$ (planted-violation recovery). The precision audit estimates its
precision $\pi$ on fired cases, equivalently the false-discovery rate $1-\pi$; this is not
the false-alarm rate $\hat f$, whose denominator is the truly-clean items, and the two
coincide only through the base rate. Under the $\hat d=1$ idealization supported by the
positive control, Corollary~\ref{cor:onesided} gives $\mathrm{PCR}\le\mathrm{PCR}^\star$,
and the understatement is exactly the false-positive fraction $(1-\pi)(1-\mathrm{PCR})$, which the
audited precision estimates directly; estimating $\hat f$ itself, and hence the bias when
$\hat d<1$, additionally requires sampling oracle-silent items to fix the base rate.

Proposition~\ref{prop:bias} and Corollary~\ref{cor:onesided} concern the oracle's bias on
a fixed set of items and are exact: they involve no distributional assumption. The reported
PCR is also a finite sample, so its total error as an estimate of the rate the benchmark's
item-generating process would produce carries a second, statistical part. We report that part
empirically, as the percentile bootstrap interval on every rate, resampling at the source
level where items share a parent source (Supplementary Note~1.4). A distribution-free
alternative, obtained by adding a Hoeffding term to the bias bound above under independent
sampling, is stated in Supplementary Note~1.7; at the cell sizes used here it is looser than
the bootstrap interval, so we use it only to make the decomposition explicit and not as the
reported uncertainty.

\noindent Write $\hat Y^v_i\in\{0,1\}$ for the learned verifier's decision on item $i$, in
place of the oracle's $\hat Y_i$, and $\mathrm{PCR}^v=1-\frac1n\sum_i\hat Y^v_i$ for the rate
it reports. Substituting the verifier for the oracle when reference data are unavailable at
inference costs at most the two decisions' measurable disagreement rate
$\delta=\frac1n\#\{i:\hat Y^v_i\neq\hat Y_i\}$: both rates are means of indicators over the
same items, so $|\mathrm{PCR}^v-\mathrm{PCR}|=|\frac1n\sum_i(\hat Y_i-\hat Y^v_i)|
\le\frac1n\sum_i|\hat Y_i-\hat Y^v_i|=\delta$, and hence
$|\mathrm{PCR}^v-\mathrm{PCR}^\star|\le(1-\hat d)+\hat f+\delta$ by
Corollary~\ref{cor:onesided}. The verifier's precision and recall in
Table~\ref{tab:verifier} summarize that disagreement. Because that disagreement is
small only in distribution and large where transfer fails, the bound is tight in
distribution and loose out of it, which is exactly where reference data are absent; we
therefore present it as a guarantee for in-distribution use, not a license to replace the
oracle across constraint types. Standard background results
(Bragg's law, scale-bar calibration, convex-hull stability, the percentile bootstrap, a
concentration inequality for the empirical rate, and the exact McNemar test) are
collected in Supplementary Note~1.

\subsection{Models and experimental protocol}
We evaluate nine models in total across three passes. The original pass (data collected
2026-07-04 through 2026-07-11, re-run on corrected oracles 2026-07-16/17) covers one open
vision--language model, Qwen2.5-VL-7B-Instruct\cite{qwen25vl} (Hugging Face repository
\texttt{Qwen/Qwen2.5-VL-7B-Instruct}, run locally in \texttt{bfloat16} on an NVIDIA A100
80GB via \texttt{transformers}), and three frontier models accessed through the OpenRouter
API: \texttt{openai/gpt-4o}, \texttt{google/gemini-2.5-flash-lite}, and
\texttt{anthropic/claude-sonnet-4.5}. A later-generation refresh pass (data collected
2026-07-19/20) adds \texttt{anthropic/claude-opus-4.6},
\texttt{openai/gpt-5.6-terra}, \texttt{qwen/qwen3-vl-8b-instruct},
and a reasoning-mode condition, \texttt{qwen/qwen3-vl-8b-thinking}, all four again via
OpenRouter. This refresh is one to three provider releases behind the newest models
available on the collection date: within the same families, Claude Opus 4.7, 4.8 and 5,
and the flagship GPT-5.6-Sol tier, had all been released by 2026-07-20, and GPT-5.6-Terra
is the balanced rather than the flagship tier of its series. A third pass (data collected
2026-08-06) adds \texttt{openai/gpt-5.6-sol}, the flagship sibling of GPT-5.6-Terra, at the
same $n=30$, $S=3$ scale, to test directly whether flagship tier exceeds balanced tier
within the same generation; it does not (Discussion). We therefore describe the second
pass as a later-generation, not a frontier, comparison, and the third as answering the
flagship-tier question the second pass's absence of a flagship model had left open. Model strings, weights, and
API routing are pinned in the released repository; OpenRouter does not expose a
per-request snapshot date distinct from the model string for these providers, so the
listed model identifiers and the stated collection dates are the reproducibility anchor. We additionally evaluate a materials-domain
structure-aware model, MatterChat\cite{matterchat2026} (its released Mistral-7B-based
checkpoint and CHGNet structure encoder, run locally), on the DFT arms only (Baselines
subsection). No LLaVA-class model was run.

For each model and item we sample reasoning chains at a fixed decoding configuration
with $S=3$ seeds, apply the oracles, and report mean PCR with percentile bootstrap
confidence intervals ($B=10{,}000$ resamples); CGSV uses the same decoding and $K=2$
rounds. The local model uses greedy decoding (\texttt{do\_sample=False}); the API models
are called with no explicit temperature or top-$p$ override, i.e.\ each provider's own
default sampling settings for the given model string, which we did not tune. We ran the most informative experiment first, the
oracle-validity gate reported above (Table~\ref{tab:exp1}): a positive control that
injected synthetic violations into otherwise valid chains at known rates and confirmed
the oracles recover them, followed by a manual audit of oracle-fired cases that
quantified oracle precision on real chains. This established that PCR measures genuine
physical violations rather than parsing artifacts. Tolerances, prompts, and the DFT
protocol were fixed before the reported evaluation runs and are released with the code; we
do not describe them as pre-registered, because two oracles were revised mid-project after
the precision audit (the spectrum and magnetism checks, above) and every affected number was
re-collected under the revised definition rather than carried over. No time-stamped
registration of the tolerances and prompts was made (checked directly against this
project's version history), so we report the wording above rather than an unverifiable
pre-registration claim. We report $S=3$
seeds per condition, bootstrap confidence intervals on all rates, and the exact McNemar test
for paired before/after comparisons. Each seed draws a fresh, independently sampled subset of items (verified
directly: zero duplicate source files across seeds in every (constraint-type, model) cell
checked), so the $S=3$ seeds are not the same items re-scored, and pooling records across
seeds within a cell does not pool literal duplicates. The separate concern that the six
McNemar tests in Table~\ref{tab:cgsv} (four per-model CGSV comparisons plus two
mechanism-isolation controls) needed a multiple-comparisons correction is now resolved: we
applied Holm and Benjamini--Hochberg correction directly (Baselines subsection, Statistics
paragraph) and every CGSV effect survives.

\subsection{Baselines, ablations, and controls}
Three of the study's controls are not consistently reported in comparable
verifier and reasoning-benchmark studies: the oracle positive control with a
precision audit, bootstrap confidence intervals with an exact paired test, and a
held-out-constraint generalization split, to the best of our knowledge as of this
writing. A physical-consistency metric, a released verifier, and a
self-correction loop are nonetheless interpretable only against the further
baselines and controls specified here.

\noindent\emph{Models.} We added the later-generation refresh and reasoning-mode
condition described above (Claude Opus 4.6, GPT-5.6-Terra, Qwen3-VL-8B-Instruct,
Qwen3-VL-8B-Thinking), and a materials-domain evaluee. For the domain evaluee we ran
MatterChat\cite{matterchat2026}, a structure-aware model that aligns a pretrained CHGNet
material encoder with a Mistral-7B language model, on the three DFT constraint types (its
native input, a crystal structure) using its released checkpoint and code, on a shuffled
sample of 120 real Materials Project structures (an unshuffled first sample, drawn in raw
file order, was 3.3\% magnetic versus the corrected sample's 65\%, so we discarded it and
redrew with shuffling before reporting). MatterChat reaches PCR 0.783 (stability), 0.795
(metallicity, $n=112$ checkable of 120), and 0.850 (magnetism, genuinely varying its answer
across materials rather than defaulting to one class, confirmed directly). We did not run a domain-specific model on the image arms.
Microscopy-specific vision--language models for materials do exist; we checked whether the
one cited here, MicroscopyGPT,\cite{choudhary2025microscopygpt} could serve as a domain
evaluee on the scale-bar arm and found its task and interface are incompatible with the
question this arm poses, not merely unevaluated. MicroscopyGPT is trained to caption
simulated scanning transmission electron microscopy images of 2D materials with a full
atomic structure (lattice parameters, element identities, and atomic coordinates), and its
released interface has no scale-bar-reading or feature-size-estimation output at all; forcing
it to answer a size question it was never built to produce would not evaluate the model, it
would evaluate an ad hoc adapter we would have to invent around it. We therefore leave the
image arms without a domain-specific evaluee and report this as an honest coverage gap. On reasoning mode: Qwen3-VL-8B-Thinking does not improve over
Qwen3-VL-8B-Instruct overall (0.579 vs.\ 0.555, bootstrap CIs overlapping), so explicit
reasoning mode alone does not reliably raise PCR in this sample (Results).

\noindent\emph{Verifier baselines.} We compare the released verifier against a zero-shot
LLM-as-judge (Gemini 2.5 Flash Lite, prompted to judge each chain for a physical-consistency
violation with a stated confidence) and the evaluated local model's own token-level
self-confidence (mean log-probability of its stated final answer span), on the same
in-distribution and six held-out-constraint-type splits, 2{,}152 items. Neither baseline
is competitive with the fine-tuned verifier in distribution: the LLM-judge reaches
AUC $=0.447$ (95\% CI $[0.400,0.493]$, indistinguishable from or below chance) and
self-confidence AUC $=0.315$ (95\% CI $[0.264,0.365]$, significantly \emph{below} chance,
an anti-correlation we discuss below), against the fine-tuned verifier's in-distribution
AUC $=0.850$. Held out, all three approaches are weak: the LLM-judge ranges AUC
$0.489$--$0.735$ across the six constraint types (best on microscopy, worst on diffraction)
and self-confidence ranges AUC $0.246$--$0.523$, several below chance; neither baseline
recovers the fine-tuned verifier's in-distribution advantage out of distribution, and
none of the three approaches is a reliable held-out detector. We investigated the
below-chance self-confidence result directly rather than treat it as noise: on a 40-item
sample, chains later judged physically inconsistent had a \emph{higher} mean token
log-probability ($-2.84$) than chains judged consistent ($-3.99$), i.e.\ the evaluated
model is more confident, not less, when it is physically wrong, which is the source of the
below-chance AUC rather than a sign error in our scoring. We separately built reliability
diagrams (binned calibration curves) and a recommended per-split operating threshold for
the released verifier: in distribution its predicted probabilities are reasonably
calibrated (seven of ten reliability bins have a gap under 0.1 between predicted and
empirical violation rate; the two largest gaps are 0.198 in the 0.1--0.2 bin and 0.228 in
the 0.3--0.4 bin) and its default 0.5 threshold is close to optimal (best $F_1=0.765$ at threshold
0.25 versus $F_1=0.735$ at 0.5). Held out, however, every constraint type is severely
miscalibrated in the same direction: the verifier's low-probability bins massively
underestimate the true violation rate (for example, held-out diffraction: predicted
$0.07$--$0.14$ in its lowest bins, actual $0.47$--$0.49$; held-out spectra: predicted
$0.02$--$0.15$, actual $0.84$--$0.93$), so at the default 0.5 threshold the verifier
essentially never fires held out (recall as low as 0.03 for diffraction), and a much lower
threshold (0.05 in five of six held-out splits) recovers usable recall at a real precision
cost (for example, held-out diffraction $F_1$ rises from 0.057 at threshold 0.5 to 0.628 at
threshold 0.05). No off-the-shelf process-reward model (PRM) comparison is reported: we
checked directly and found no general-purpose or physics/materials-science PRM publicly
available; every widely used open PRM (for example Qwen2.5-Math-PRM-7B, PRM800K-tuned
variants, MedPRM) is trained on a specific domain's own step-labeled data (mathematics,
medicine), so applying one to these physics reasoning chains would be an uninterpretable
domain mismatch rather than a meaningful comparator, and we report this as a genuine
absence rather than an oversight.

\noindent\emph{Self-verification controls.} We isolate CGSV's contribution with three
controls, matched to CGSV's items, models, $K$, and decoding: (i) unaided
self-refinement, in which the model is told only to double-check its answer with no oracle
feedback, shows no significant change (PCR $0.349\to0.339$, $n=542$, $b=34$, $c=29$,
$p=0.615$); (ii) generic equal-compute re-prompting, in which the model is simply asked to
reconsider with no critique framing, likewise shows no significant change
($0.358\to0.342$, $n=562$, $b=24$, $c=15$, $p=0.20$); both controls, unlike CGSV, show
nonzero regressions. (iii) Best-of-3 independent resampling (no multi-turn revision) gives a
random-pick PCR of $0.363$ and an oracle-oracle best-of-3 ceiling of $0.419$ ($n=614$),
both below CGSV's after-PCR for every model, by as little as $0.02$ for the weakest model
and on a separately drawn item sample. We did not build a verifier-as-selector
comparison on top of best-of-$N$ (this would need each candidate's chain text persisted for
verifier scoring, which the current best-of-$N$ harness does not retain), so the
best-of-3 result establishes only that resampling alone leaves real headroom over a random
pick, not that the verifier can currently capture it; this is reported as a partial result. Together, CGSV's significant, zero-regression gain
against two null controls supports attributing it to the physical-consistency signal
specifically, not to generic self-reflection or additional sampling at equal compute
(Results, CGSV table).

\noindent\emph{Ablations, human agreement, and error analysis.} (i) Tolerance sensitivity:
we re-checked already-collected real model claims against the oracle at multiple tolerance
settings without new model calls. Diffraction's model ranking is \emph{not} robust to
tolerance: at $\mathrm{tol}=0.25^\circ$ the local model's PCR exceeds the API models'
pooled PCR, but the ranking reverses by $\mathrm{tol}=1.0^\circ$ (pairwise rank agreement
$0.0$ against the default $0.5^\circ$ across the tolerances tested); spectrum's ranking is
fully robust across $x_{\rm tol}\in\{10,20,30\}\,\mathrm{cm}^{-1}$ and prominence fraction
$\in\{0.03,0.05,0.08\}$ (rank agreement $1.0$); stability's ranking is robust at 2 of 3
non-default tolerances in the $\{0,0.025,0.05,0.10\}\,\mathrm{eV\,atom^{-1}}$ sensitivity
grid stated in Methods (rank agreement $0.67$). We flag the diffraction result as a real robustness concern
for any model comparison drawn from that single column. We additionally ran an independent
prompt/format sensitivity check: for diffraction, spectra, and stability, we wrote a second,
independently worded prompt template per constraint type (not a minimal edit of the
original) and re-collected real model responses from four of the paper's API models
(Gemini~2.5 Flash Lite, GPT-4o, Claude Sonnet~4.5, GPT-5.6-Terra) under both templates, then
measured the same pairwise rank agreement metric used for the tolerance check above.
Diffraction's ranking is again fragile under this different kind of perturbation (rank
agreement $0.5$, echoing its tolerance-sensitivity fragility rather than resolving it);
spectra's ranking is fairly robust to rewording (rank agreement $0.833$); stability's ranking
is more mixed (rank agreement $0.667$), though its absolute rates stay tightly clustered
under both wordings ($0.667$--$0.933$ across all four models and both templates), so this
number likely reflects several close scores changing relative order rather than a dramatic
swing in any one model's judgment. Independent of the exact ranking, GPT-5.6-Terra is the
clear standout on spectra under both wordings ($0.867$/$0.933$) while GPT-4o is consistently
the weakest under both ($0.000$/$0.067$); notably, GPT-4o's persistent low-checkable-rate
anomaly on spectra (Results; reproduced again here under the original wording, $n=7/15$) did
\emph{not} reproduce under the paraphrase ($n=15/15$), a small but concrete clue that this
long-standing anomaly may be sensitive to the exact prompt wording rather than a fixed
property of the (GPT-4o, spectra) pairing. (ii) Human agreement: each precision audit was performed by a single human
annotator. Accordingly, Table~1 reports single-annotator point estimates, and no
inter-rater Cohen's $\kappa$ or per-precision confidence interval is reported.
This is a limitation of the present study. (iii) Error taxonomy: from the diffraction and microscopy oracles'
already-human-confirmed real violations (78 and 93 cases respectively; spectra and the DFT
families are excluded because their human reviews predate the spectrum-oracle and
magnetism-oracle corrections and would mismatch the corrected oracles' current fired-case
population), an open-ended per-item categorization followed by a semantic clustering pass
gives a small, legible taxonomy: diffraction failures are dominated by gridline/scale
misreading (70.5\%) over peak misidentification (14.1\%); microscopy failures are
dominated by scale-bar unit and unit-conversion errors (37.6\% and 23.7\%) over scale-factor
arithmetic errors (14.0\%; Fig.~\ref{fig:oracle-examples}c shows three such real cases).

(iv) Contamination: we checked whether any model chain ever
states source-specific metadata it was never shown (institution codes, database sample
IDs, present only in each dataset's own sidecar metadata, never in the rendered plot) --
zero such cases across 2{,}152 checked records, a real if narrow negative signal. We
compared each dataset's public release date against each model's documented training
cutoff where both are known: opXRD's Zenodo record (15298026) was published 2025-04-28,
which postdates GPT-4o's documented cutoff (2023-10, with some sources reporting a later
2024-06 extension), ruling out direct memorization of this specific corpus for that model;
most other model--dataset pairs cannot be so ruled out because frontier providers do not
publish per-snapshot cutoff dates. A local-model perplexity comparison (real opXRD text
mean log-probability $-0.18$ versus procedurally generated control text in the same surface
format at $-1.91$) is reported but explicitly flagged as weak and non-diagnostic: lower
perplexity on real structured data than on random control text is fully explained by real
data being more predictable than gibberish, independent of memorization. Per-family,
per-modality item counts with provenance: 205 (diffraction, opXRD), 481 (microscopy,
NFFA-EUROPE), 575 (spectra, RRUFF), 395 (stability), 255 (metallicity), and 241 (magnetism)
real oracle-checked records make up the 2{,}152-record verifier dataset described above;
they are not the 4{,}472 records of the main evaluation in Supplementary Table~S1, which
pools three seeds over nine models.

\noindent\emph{Statistics.} The model-by-constraint-type sweep ($9\times6\times3=162$ cells)
reports bootstrap confidence intervals, not a family of significance tests, so a
Holm/Benjamini--Hochberg correction (designed for hypothesis tests) does not apply to those
162 cells in the same sense. The actual family of significance tests in this paper is
Table~\ref{tab:cgsv}'s six exact McNemar tests (four per-model CGSV comparisons plus the two
mechanism-isolation controls); we corrected this family directly
(\texttt{matpcr\_metrics.holm\_bonferroni}, \texttt{benjamini\_hochberg}, verified against a
synthetic case and applied to the real before/after records in
\texttt{experiments/exp15\_multiplicity\_correction.py}). All four CGSV effects survive the
strictest (Holm) correction (adjusted $p<10^{-3}$ for every model, most far smaller), and
both negative controls remain non-significant under either correction (Holm-adjusted
$p=0.615$ and $0.399$); the qualitative finding, that CGSV works while generic
self-refinement and re-prompting do not, is robust to multiplicity correction, not an
artifact of testing six things at once. We separately checked the specific unit-of-analysis
mechanism raised above and found it does not apply as stated (seeds draw independent item
subsets, verified directly).

\FloatBarrier

\begin{availability}
The benchmark is derived entirely from public datasets: experimental powder diffraction
patterns,\cite{opxrd2025} scanning electron micrographs,\cite{aversa2018nffa} Raman
spectra,\cite{lafuente2015rruff} and first-principles reference
values.\cite{matproj2013} The assembled MatPCR items and the oracle reference values
with their provenance are available at \url{https://zenodo.org/records/21829644}.

All code, including the oracle implementations with unit tests, the chain-parsing code,
the metric code, the DFT protocol settings, the prompts, and the released verifier
weights, is available at \url{https://github.com/KurbanIntelligenceLab/matpcr}.
\end{availability}

\begin{contributions}
H.K.\ and M.K.\ conceived the study. H.K., R.K., and M.K.\ designed the methodology.
R.K.\ and H.K.\ implemented the physical oracles, the metric, and the verifier. M.K.\
designed the density functional theory protocol and the physics underlying the oracles
and is responsible for their validation. All authors designed the experimental protocol,
interpreted the analyses, and wrote and approved the manuscript. H.K., Hasan Kurban;
R.K., Rasul Khanbayov; M.K., Mustafa Kurban.
\end{contributions}

\begin{conflicts}
The authors declare no competing interests.
In line with the policy that large language models do not satisfy authorship criteria,
we disclose that an AI assistant was used during the preparation of this manuscript.
All content was reviewed and verified by the human authors, who take full
responsibility for the work. No AI system is an author.
\end{conflicts}

\begin{funding}
The authors received no specific funding for this work.
\end{funding}

\begin{acknowledgments}
Computations were run across two single-GPU hosts over the course of the study: an
NVIDIA P100 (16\,GB) host for early, pre-pilot exploratory work that produced no result
reported here, and an NVIDIA A100 (80\,GB) host for the pilot and Experiment 1--4 onward,
later joined by QCRI's Panther cluster (A100 80\,GB and H200 141\,GB partitions) for the
later-generation model refresh, the verifier retraining, and the DFT-grounded re-runs.
Exact total GPU-hours are not recoverable: most historical result logs did not record
per-item timing, and the original P100 host's logs no longer exist, so we do not report
an aggregate figure we cannot verify. We thank the institutions maintaining the public
datasets this work depends on (opXRD, NFFA-EUROPE, RRUFF, Materials Project).
\end{acknowledgments}

\bibliography{matpcr_refs}

\clearpage
\setcounter{table}{0}
\setcounter{figure}{0}
\renewcommand{\thetable}{S\arabic{table}}
\renewcommand{\thefigure}{S\arabic{figure}}

\section*{Supplementary Information}

\noindent This file contains Supplementary Note 1 (standard derivations underlying the
oracles and estimators used in the Article) and Supplementary Table S1 (per-cell
Physical-Consistency Rate with bootstrap confidence intervals). The novel measurement
result (Proposition 1 and Corollary 1) is stated and proved in the main Article.

\section*{Supplementary Note 1: standard derivations}
\noindent The results collected here are standard and are included for completeness.
\setcounter{subsection}{0}
\renewcommand{\thesubsection}{1.\arabic{subsection}}

\subsection{Bragg's law and \texorpdfstring{$d$}{d}-spacing}
For constructive interference from lattice planes of spacing $d$ at Bragg angle
$\theta$ with a beam of wavelength $\lambda$, the path difference $2d\sin\theta$ equals an
integer number of wavelengths; for first order ($n=1$), $\lambda=2d\sin\theta$. The
measured scattering angle is $2\theta$, so a claimed peak at $2\theta$ corresponds to
$d=\lambda/\!\big(2\sin\theta\big)$ with $\theta$ equal to half the measured angle.
Physical realizability requires $\sin\theta\le1$, equivalently $d\ge\lambda/2$, and the
instrument range further restricts $2\theta$ to a finite window; the diffraction oracle
enforces both and matches claimed peaks to the reference pattern of the stated phase
within tolerance.

\subsection{Scale-bar calibration}
A scale bar of stated length $\ell_{\mathrm{nm}}$ spanning $p_{\mathrm{bar}}$ pixels
fixes the calibration $s=\ell_{\mathrm{nm}}/p_{\mathrm{bar}}$ (nanometres per pixel).
For an image whose long edge is $p_{\mathrm{img}}$ pixels, the field of view is
$s\,p_{\mathrm{img}}$ nanometres, which upper-bounds any reported feature size and
defines the microscopy oracle.

\subsection{Convex-hull stability}
For a given composition, the energy above the convex hull $E_{\mathrm{hull}}\ge0$ is the
formation energy minus that of the lowest-energy combination of stable phases at the
same composition. A phase is thermodynamically stable when $E_{\mathrm{hull}}=0$ and is
treated as within tolerance when $E_{\mathrm{hull}}\le\epsilon$; the DFT-grounded near-hull oracle uses the reported threshold
$\epsilon=0.05\,\mathrm{eV\,atom^{-1}}$.

\subsection{Percentile bootstrap confidence intervals}
For a rate estimated from $n$ items we resample items with replacement $B$ times,
recompute the rate on each resample, and report the empirical $\alpha/2$ and
$1-\alpha/2$ quantiles. Under independent sampling of items the interval is consistent
as $n,B\to\infty$. Where items share a parent source (for example, several questions
about one structure), we resample at the source level to respect that dependence. We
report the interval for all rates.

\subsection{Concentration of the empirical rate}
For a rate $\bar X=\frac1n\sum_{i=1}^n X_i$ that averages independent variables
$X_i\in[0,1]$ with mean $\mathbb{E}[\bar X]=\mu$, Hoeffding's inequality gives
$\Pr[\,|\bar X-\mu|>t\,]\le 2e^{-2nt^2}$ for every $t>0$. Applied to $X_i=1-Y_i$ this
bounds the sampling error of $\mathrm{PCR}^\star$ about the population physical-consistency
rate about the population physical-consistency rate; where items share a parent source,
$n$ is the number of independent sources, matching the source-level percentile bootstrap
above.

\subsection{Exact McNemar test}
For paired before/after consistency, let $b$ and $c$ be the numbers of discordant items
(consistent-then-inconsistent, i.e.\ regressions, and inconsistent-then-consistent,
i.e.\ fixes). Under the null that each discordant item changes direction with
probability $\tfrac12$, $c\sim\mathrm{Binomial}(b+c,\tfrac12)$, and the two-sided exact
$p$-value is $\min\!\big(1,\,2\sum_{k=0}^{\min(b,c)}\binom{b+c}{k}2^{-(b+c)}\big)$. This
is the test used for CGSV in the main Article.
\subsection{Distribution-free bound on the total error of a reported rate}
The main Article's Proposition~1 and Corollary~1 are exact statements about the oracle's
bias on a fixed set of items. If, in addition, the checkable items are drawn independently
from the benchmark's item-generating process, with
$\mu=\mathbb{E}[1-Y_i]$ that process's physical-consistency rate, then splitting
$\mathrm{PCR}-\mu=(\mathrm{PCR}-\mathrm{PCR}^\star)+(\mathrm{PCR}^\star-\mu)$, bounding the
first term by $(1-\hat d)+\hat f$ (Corollary~1) and the second by the concentration
inequality above gives, for every $t>0$, with probability at least $1-2e^{-2nt^2}$,
\[
|\mathrm{PCR}-\mu|\;\le\;\underbrace{(1-\hat d)+\hat f}_{\text{oracle bias}}\;+\;\underbrace{t}_{\text{sampling}}.
\]
Where items share a parent source the same statement holds with $n$ replaced by the number
$m$ of independent sources under balanced source sizes; unequal sizes give a size-weighted
mean and a correspondingly weaker rate. Both steps are standard, and the independence
assumption is an idealization for a convenience benchmark. At the cell sizes used in the
Article the sampling term is far looser than the percentile bootstrap interval: at $n=90$ and
$95\%$ confidence it is $t=0.143$, against reported per-cell intervals of roughly $\pm0.09$.
The Article therefore reports bootstrap intervals as its uncertainty and uses this bound only
to show that the two sources of error are separately controlled.

\clearpage
\section*{Supplementary Figure S1}
\noindent\textbf{Oracle precision audit, every family, full breakdown.} Referenced from
the main Article's Results (Oracle validity subsection). Each bar is every
human-reviewed oracle-fired case for that family (Article, Table~1), split into real
violations, false positives, and unsure verdicts. Precision is high across the board
(77.2--95.3\%) but not uniform: diffraction's tolerance is visibly the tightest relative
to typical model precision, while the three DFT-grounded families cluster above 92\%.
\begin{figure}[h]
\centering
\includegraphics[width=0.92\linewidth]{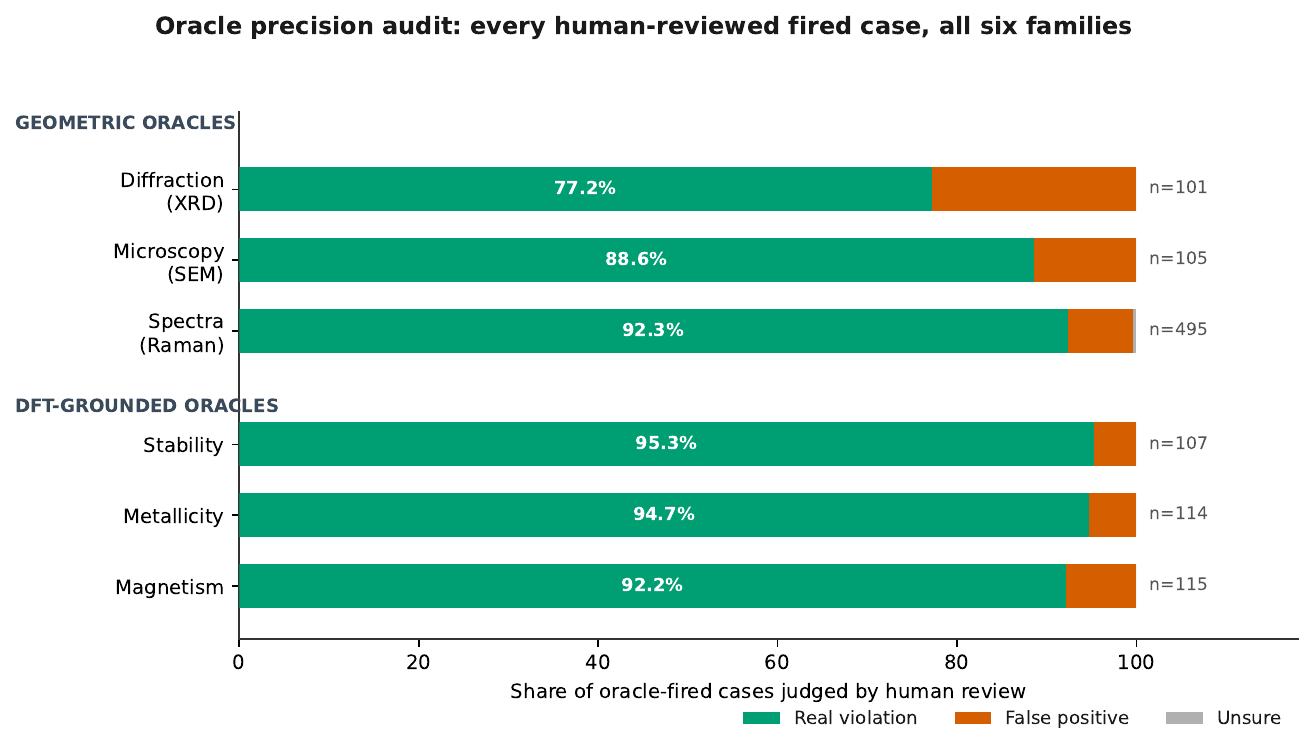}
\label{fig:precision-funnel}
\end{figure}

\clearpage
\section*{Supplementary Table S1}
\noindent\textbf{Per-cell Physical-Consistency Rate with bootstrap 95\% confidence
intervals.} Referenced from the main evaluation table of the Article. $n$ is the number of checkable records in the cell
(pooled across $S=3$ seeds); CI is the percentile bootstrap 95\% interval ($B=10{,}000$
resamples) on the cell's own PCR. The first four models are the originally evaluated set;
the next four are the later-generation refresh; the last (GPT-5.6-Sol) is a flagship-tier
addition testing whether it exceeds the balanced-tier GPT-5.6-Terra within the same
generation (Article, Table~1 note). Each item in this benchmark exposes
exactly one checkable constraint of its own constraint type, so the per-constraint-type
violation rate $\rho_t=1-\mathrm{PCR}$ for every cell below and is not tabulated separately.
\begin{table}[h]
\centering
{\scriptsize\setlength{\tabcolsep}{4pt}\renewcommand{\arraystretch}{1.05}
\begin{tabular}{l l S[table-format=3.0] S[table-format=1.3] c}
\toprule
Model & Constraint type & {$n$} & {PCR} & {95\% CI} \\
\midrule
Qwen2.5-VL-7B & Diffraction & 64 & 0.500 & [0.375, 0.625] \\
Qwen2.5-VL-7B & Microscopy & 87 & 0.667 & [0.563, 0.759] \\
Qwen2.5-VL-7B & Spectra & 88 & 0.148 & [0.080, 0.227] \\
Qwen2.5-VL-7B & Stability$^{*}$ & 90 & 0.744 & [0.656, 0.833] \\
Qwen2.5-VL-7B & Metallicity$^{*}$ & 87 & 0.540 & [0.437, 0.644] \\
Qwen2.5-VL-7B & Magnetism$^{*}$ & 90 & 0.767 & [0.678, 0.856] \\
\addlinespace
Gemini 2.5 Flash Lite & Diffraction & 60 & 0.517 & [0.400, 0.650] \\
Gemini 2.5 Flash Lite & Microscopy & 86 & 1.000 & [1.000, 1.000] \\
Gemini 2.5 Flash Lite & Spectra & 90 & 0.278 & [0.189, 0.378] \\
Gemini 2.5 Flash Lite & Stability$^{*}$ & 89 & 0.730 & [0.640, 0.820] \\
Gemini 2.5 Flash Lite & Metallicity$^{*}$ & 84 & 0.702 & [0.607, 0.798] \\
Gemini 2.5 Flash Lite & Magnetism$^{*}$ & 89 & 0.494 & [0.382, 0.596] \\
\addlinespace
GPT-4o & Diffraction & 64 & 0.625 & [0.500, 0.750] \\
GPT-4o & Microscopy & 87 & 0.954 & [0.908, 0.989] \\
GPT-4o & Spectra & 44 & 0.068 & [0.000, 0.159] \\
GPT-4o & Stability$^{*}$ & 90 & 0.800 & [0.711, 0.878] \\
GPT-4o & Metallicity$^{*}$ & 85 & 0.671 & [0.565, 0.765] \\
GPT-4o & Magnetism$^{*}$ & 90 & 0.678 & [0.578, 0.778] \\
\addlinespace
Claude Sonnet 4.5 & Diffraction & 64 & 0.547 & [0.422, 0.672] \\
Claude Sonnet 4.5 & Microscopy & 87 & 0.977 & [0.943, 1.000] \\
Claude Sonnet 4.5 & Spectra & 90 & 0.189 & [0.111, 0.267] \\
Claude Sonnet 4.5 & Stability$^{*}$ & 90 & 0.756 & [0.667, 0.844] \\
Claude Sonnet 4.5 & Metallicity$^{*}$ & 85 & 0.694 & [0.600, 0.788] \\
Claude Sonnet 4.5 & Magnetism$^{*}$ & 90 & 0.756 & [0.666, 0.844] \\
\addlinespace
Claude Opus 4.6 & Diffraction & 64 & 0.578 & [0.453, 0.703] \\
Claude Opus 4.6 & Microscopy & 87 & 0.954 & [0.908, 0.989] \\
Claude Opus 4.6 & Spectra & 90 & 0.244 & [0.156, 0.333] \\
Claude Opus 4.6 & Stability$^{*}$ & 90 & 0.722 & [0.633, 0.811] \\
Claude Opus 4.6 & Metallicity$^{*}$ & 86 & 0.721 & [0.628, 0.814] \\
Claude Opus 4.6 & Magnetism$^{*}$ & 90 & 0.700 & [0.600, 0.789] \\
\addlinespace
GPT-5.6-Terra & Diffraction & 64 & 0.797 & [0.703, 0.891] \\
GPT-5.6-Terra & Microscopy & 81 & 0.951 & [0.901, 0.988] \\
GPT-5.6-Terra & Spectra & 90 & 0.833 & [0.756, 0.911] \\
GPT-5.6-Terra & Stability$^{*}$ & 90 & 0.656 & [0.556, 0.756] \\
GPT-5.6-Terra & Metallicity$^{*}$ & 85 & 0.659 & [0.553, 0.753] \\
GPT-5.6-Terra & Magnetism$^{*}$ & 90 & 0.767 & [0.678, 0.844] \\
\addlinespace
Qwen3-VL-8B-Instruct & Diffraction & 62 & 0.468 & [0.355, 0.597] \\
Qwen3-VL-8B-Instruct & Microscopy & 86 & 0.802 & [0.709, 0.884] \\
Qwen3-VL-8B-Instruct & Spectra & 90 & 0.133 & [0.067, 0.200] \\
Qwen3-VL-8B-Instruct & Stability$^{*}$ & 87 & 0.713 & [0.621, 0.805] \\
Qwen3-VL-8B-Instruct & Metallicity$^{*}$ & 77 & 0.688 & [0.584, 0.792] \\
Qwen3-VL-8B-Instruct & Magnetism$^{*}$ & 90 & 0.533 & [0.433, 0.633] \\
\addlinespace
Qwen3-VL-8B-Thinking & Diffraction & 64 & 0.578 & [0.453, 0.703] \\
Qwen3-VL-8B-Thinking & Microscopy & 87 & 0.885 & [0.816, 0.943] \\
Qwen3-VL-8B-Thinking & Spectra & 87 & 0.092 & [0.034, 0.161] \\
Qwen3-VL-8B-Thinking & Stability$^{*}$ & 90 & 0.767 & [0.678, 0.856] \\
Qwen3-VL-8B-Thinking & Metallicity$^{*}$ & 87 & 0.644 & [0.540, 0.736] \\
Qwen3-VL-8B-Thinking & Magnetism$^{*}$ & 89 & 0.506 & [0.404, 0.607] \\
\addlinespace
GPT-5.6-Sol & Diffraction & 61 & 0.820 & [0.721, 0.918] \\
GPT-5.6-Sol & Microscopy & 82 & 0.915 & [0.854, 0.976] \\
GPT-5.6-Sol & Spectra & 89 & 0.764 & [0.674, 0.854] \\
GPT-5.6-Sol & Stability$^{*}$ & 89 & 0.742 & [0.652, 0.831] \\
GPT-5.6-Sol & Metallicity$^{*}$ & 88 & 0.670 & [0.580, 0.761] \\
GPT-5.6-Sol & Magnetism$^{*}$ & 90 & 0.733 & [0.644, 0.822] \\
\bottomrule
\end{tabular}}
\end{table}

\end{document}